\definecolor{darkred}{RGB}{150,0,0}
\definecolor{darkgreen}{RGB}{0,150,0}
\definecolor{darkblue}{RGB}{0,0,150}
\date{}
\author{\\ Farzan Farnia\thanks{Laboratory for Information \& Decision Systems, Massachusetts Institute of Technology, email: farnia@mit.edu}, Amirali Aghazadeh\thanks{Department of Electrical Engineering and Computer Science, University of California, Berkeley, email: amiralia@berkeley.edu}, James Zou\thanks{Department of Biomedical Data Science, Stanford University, email: jzou@stanford.edu}, David Tse\thanks{Department of Electrical Engineering, Stanford University, email: dntse@stanford.edu}}
\newtheorem{thm}{Theorem}
\newtheorem*{thm*}{Theorem}
\newtheorem*{remark*}{Remark}
\newtheorem{lemma}{Lemma}
\newtheorem{cor*}{Corollary*}
\DeclareMathAlphabet{\mathsfit}{\encodingdefault}{\sfdefault}{m}{sl}
\SetMathAlphabet{\mathsfit}{bold}{\encodingdefault}{\sfdefault}{bx}{n}
\title{Group-Structured Adversarial Training}
\begin{document}

\maketitle

\begin{abstract}
Robust training methods against perturbations to the input data have received great attention in the machine learning literature. A standard approach in this direction is adversarial training which learns a model using adversarially-perturbed training samples. However, adversarial training performs suboptimally against perturbations structured across samples such as universal and group-sparse shifts that are commonly present in biological data such as gene expression levels of different tissues. In this work, we seek to close this optimality gap and introduce Group-Structured Adversarial Training (GSAT) which learns a model robust to perturbations structured across samples. We formulate GSAT as a non-convex concave minimax optimization problem which minimizes a group-structured optimal transport cost. Specifically, we focus on the applications of GSAT for group-sparse and rank-constrained perturbations modeled using group and nuclear norm penalties. In order to solve GSAT's non-smooth optimization problem in those cases, we propose a new minimax optimization algorithm called GDADMM by combining Gradient Descent Ascent (GDA) and Alternating Direction Method of Multipliers (ADMM). We present several applications of the GSAT framework to gain robustness against structured perturbations for image recognition and computational biology datasets.
\end{abstract}

\section{Introduction}
Robust learning schemes are the key to reliably deploy statistical learning models in high-risk applications such as self-driving cars and healthcare systems. In the machine learning literature, several frameworks have been proposed to improve robustness in various learning applications. All these frameworks attempt to ensure that the trained model remains robust under certain changes to the data distribution. A popular approach in this direction is to optimize a model's worst-case performance against an ambiguity set around the input distribution. A critical step toward developing such worst-case methods is to find a proper model for the potential uncertainties in the collected data. 

Due to the significance of modern deep learning applications, robust training for deep neural net (DNN) models has recently received enormous attention. While DNNs have achieved super-human scores over several benchmark datasets, they have been observed to lack robustness to minor adversarially-designed perturbations added to the input data widely-known as \emph{adversarial attacks} \citep{szegedy2013intriguing,biggio2013evasion}. Adversarial training \citep{goodfellow2014explaining,madry2017towards}, which is to train a model using adversarially-perturbed training examples, is a standard approach to train DNN classifiers robust against adversarial attacks. \citet{madry2017towards} show that adversarial training can be interpreted as finding the model with the optimal worst-case performance against norm-bounded perturbations independently generated across samples. While the underlying uncertainties in many learning applications can be potentially different from independent norm-bounded perturbations, \citep{madry2017towards}'s original scheme against norm-bounded perturbations is still widely-considered as the standard robust training method for DNNs.

In this work, we focus on the applications of adversarial training to learning tasks where the perturbations to different data points are \emph{non-independently distributed across samples}. Here, the additive perturbations are assumed to be further constrained to certain structures across both training and test samples, resulting in correlated perturbations to different data points. This assumption holds in several applications of interest in machine learning. For example, \emph{universal adversarial perturbations} \citep{moosavi2017universal} perturb every sample in an image recognition task with an identical perturbation which leads to completely correlated perturbations across samples. Standard adversarial training methods are therefore unable to fully capture the dependencies of universal perturbations.

Structured perturbations are not limited to image datasets and are also present in many computational biology applications during test time. In biological data collections, the samples gathered in different experimental batches typically show significant variations recognized as the \emph{batch effects}. Batch effects are modeled as the addition of a single perturbation vector to the data points in every batch \citep{luo2010comparison}, implying that collectively a universal set of perturbations affect training and test data. Therefore, in order to train a model that can generalize from one set of batches to another set, the learner needs robustness against \emph{universal sets of perturbations}.

As another example from computational biology, cell types are known to cause significant variations in the gene expression levels of a small subset of genes \citep{rahmani2016sparse}. As a result, a sparse set of gene expression variables will undergo distribution shifts if cell types differ between training and test samples. Therefore, for learning a classification rule that generalizes from one cell type to another, the learner needs to be robust to group-sparse perturbations. Such robustness against structured group-sparse perturbations will lead to generalization from training samples with one cell type to test samples with other cell types.

In all the discussed applications, proper generalization from training to test data requires robustness against certain \emph{group-structured perturbations}. A natural question is whether standard adversarial training can be further improved in learning under such group-structured perturbations. In particular, \citet{shafahi2018universal} demonstrate that in the special case of universal adversarial perturbations the improvement on test data is achievable by constraining adversarial training to an identical perturbation across training examples. Can we more generally extend adversarial training to gain robustness against group-structured perturbations? In this work, we address this question by proposing \emph{Group-Structured Adversarial Training (GSAT)} as a robust learning framework against group-structured perturbations. 

\section{Contributions}
This work focuses on developing robust learning algorithms against various types of group-structured perturbations.
To develop the GSAT framework, we first provide a generalization of optimal transport problems for statistically modeling distribution shifts under structured perturbations. Based on this generalization, we find the optimal transportation map minimizing a group-structured transportation cost for transporting a group of samples between two probability domains. We generalize the primary Kantorovich duality theorem \citep{villani2008optimal} from optimal transport theory to group-structured settings for analyzing GSAT under permutation-invariant transportation costs. 
 
 We demonstrate that every group-structured transportation cost targets a specific type of group-structured perturbations including identical, group-sparse, and low-rank perturbations. Next, we reduce GSAT's robust learning problem to a non-convex concave minimax optimization problem. In order to handle the non-smooth nature of the formulated minimax problem, we propose GDADMM as a minimax optimization algorithm combining the well-known Gradient Descent Ascent (GDA) and Alternating Direction Method of Multipliers (ADMM) \citep{boyd2011distributed}. We prove that GDADMM is guaranteed to converge to a stationary minimax point of GSAT's optimization problem. 
 
 Furthermore, we explore GSAT's application for robust feature selection by selecting the variables whose structured perturbation affects the classification accuracy the most. We present the numerical results of applying GSAT to  different image recognition and computational biology problems. Our contributions can be summarized as follows:
\begin{itemize}[leftmargin=*,itemsep=0.3mm]
    \item {\bf Theory:} We provide a generalization of the optimal transport problem to group-structured settings.
    \item {\bf Methodology:} We develop GSAT as a robust learning framework against group-structured perturbations.
    \item {\bf Algorithm:} We propose GDADMM as a minimax optimization algorithm for non-smooth minimax problems with convergence guarantees.
    \item {\bf Experiment:} We show the application of the GSAT framework in image recognition and computational biology.
\end{itemize}

\section{Related work}
Improving the robustness of deep learning algorithms has been extensively studied in the literature. A large body of related works \citep{goodfellow2014explaining,kurakin2016adversarial,moosavi2016deepfool,carlini2016defensive,carlini2017towards,tramer2018ensemble,tramer2020adaptive} develop various gradient-based defense methods to train robust classifiers against adversarial attacks. Moreover, developing defense schemes with certifiable robustness against adversarial attacks has been the subject of several related works \citep{sinha2017certifiable,raghunathan2018certified,wong2018provable,levine2020robustness,singla2020second,chiang2020certified,ghiasi2020breaking}. Also, Wasserstein adversarial attacks with standard Wasserstein distances have been studied in multiple related references \citep{wong2019wasserstein,levine2020wasserstein,hu2020improved}. 

Several related works have focused on adversarial perturbations that are structured \emph{across features}. References \citep{wang2014sparse,zhang2016adversarial,xiao2015feature,sharma2017attacking,chen2017ead,marzi2018sparsity,shafahi2018are,chen2018ead,xu2018structured,fan2020sparse} study sparse adversarial attack and defense schemes,
where an $L_0$ or $L_1$-norm function is optimized to impose sparse perturbations across features. The related work \citep{laidlaw2019functional} studies adversarial perturbations to certain functions of the input data. However, unlike our proposed framework the designed perturbations in those schemes are structured across features and not across samples. 
As a special case of perturbations structured across samples, achieving robustness against universal adversarial perturbations \citep{moosavi2017universal} has been studied in the related references \citep{shafahi2018are,moosavi2017analysis,akhtar2018defense}. Our work generalizes the defense schemes against universal perturbations to other types of group-structured perturbations. 

We note that the application of optimal transport costs in robust learning frameworks has been studied in multiple related works \citep{abadeh2015distributionally,esfahani2018data, lee2018minimax,shafieezadeh2019regularization}. These works develop minimax learning frameworks in which the loss function is optimized in a Wasserstein ambiguity set around the data distribution. However, their proposed approaches are based on standard optimal transport costs which cannot capture the potential structures across samples. 


\section{Preliminaries}
\subsection{Notation}
Throughout this paper, we use $\mathbf{X}$ and $Y$ to denote the $d$-dimensional feature vector and the label variable for a labeled sample $(\mathbf{X},Y)$, respectively. We also use $\boldsymbol{\delta}$ to denote the perturbation vector added to the feature vector. $\underline{\mathbf{X}}^m := [\mathbf{X}_1;\ldots ; \mathbf{X}_m]$ stands for the $m\times d$ feature matrix of a group of $m$ samples with its $i$th row containing the feature vector of the $i$th sample $\mathbf{X}_i$. Similarly, $\underline{Y}^m := [Y_1;\ldots ; Y_m]$ denotes the vector including the label variables of the $m$ group samples and $\underline{\boldsymbol{\delta}}^m:=[\boldsymbol{\delta}_1;\ldots ; \boldsymbol{\delta}_m]$ denotes the $m\times d$ perturbation matrix including the perturbations of the $m$ group samples.

Regarding the matrix norms, we use $\Vert \cdot\Vert_F$ and $\Vert \cdot \Vert_{*}$ to respectively denote the Frobenius norm and the nuclear norm, i.e. the sum of the singular values. We also denote the $\ell_{1,2}$-group norm function as $\Vert A_{m\times d} \Vert_{1,2}:= \sum_{i=1}^d \Vert A_{:,i}\Vert_2$ which is the summation of the Euclidean norm of  $A$'s columns.

\subsection{Supervised Learning  \& Adversarial Perturbations}
Given $n$ training samples $(\mathbf{x}_i,y_i)_{i=1}^n$ and loss function $\ell$, the goal in supervised learning is to find the optimal prediction rule in a parametric function space $ \mathcal{F}=\{f_{\mathbf{w}}: \mathbf{w}\in\mathcal{W} \}$ minimizing the expected loss (risk) $\mathbb{E}\bigl[\ell(f_{\mathbf{w}}(\mathbf{X}),Y)\bigr]$. Here the risk is evaluated over the test samples drawn from the underlying distribution $P_{\mathbf{X},Y}$. To do this, a standard learning approach called \emph{empirical risk minimization (ERM)} minimizes the empirical risk averaged over the training samples:
\begin{equation}
    \min_{\mathbf{w}\in\mathcal{W}}\: \frac{1}{n}\sum_{i=1}^n \ell\bigl( f_{\mathbf{w}}(\mathbf{x}_i),y_i \bigr).
\end{equation}
However, the ERM learner is shown to lack robustness against adversarial perturbations to its input. A standard approach to generate adversarial perturbations is by maximizing the loss function over a norm-ball around a data point $(\mathbf{x}_i,y_i)$:
\begin{equation}
    \underset{\boldsymbol{\delta}_i:\, \Vert \boldsymbol{\delta}_i \Vert\le \epsilon}{\arg\!\max}\: \ell\bigl( f_{\mathbf{w}}(\mathbf{x}_i+\boldsymbol{\delta}_i),y_i \bigr).
\end{equation}
Here the adversarial perturbation is generated independently for every training sample. In contrast, universal adversarial perturbations introduced in \citep{moosavi2017universal} use an identical perturbation for all samples. A universal perturbation can be generated by solving the following optimization problem:
\begin{equation}
    \underset{\boldsymbol{\delta}:\, \Vert \boldsymbol{\delta} \Vert\le \epsilon}{\arg\!\max}\: \frac{1}{n}\sum_{i=1}^n \ell\bigl( f_{\mathbf{w}}(\mathbf{x}_i+\boldsymbol{\delta}),y_i \bigr).
\end{equation}

\subsection{Optimal Transport Costs \& Distributionally Robust Adversarial Training}

In the literature, the tools from optimal transport theory have been applied to develop several robust learning frameworks 
\citep{sinha2017certifiable,esfahani2018data,lee2018minimax,shafieezadeh2019regularization}.
To review the related concepts of optimal transport theory, the optimal transport cost $W_c(P,Q)$ for cost function $c(\mathbf{z},\mathbf{z}')$ between data points $\mathbf{z}=(\mathbf{x},y),\, \mathbf{z}'=(\mathbf{x}',y')$ is defined as
\begin{equation}\label{Definition, optimal transport cost}
  W_c(P,Q) := \inf_{M\in\Pi(P,Q)}\: \mathbb{E}_M\bigl[c(\mathbf{Z},\mathbf{Z}') \bigr] .
\end{equation}
Here $\Pi(P,Q)$ denotes the set of all joint distributions on pair $(\mathbf{Z},\mathbf{Z}')$ marginally distributed according to $P_{\mathbf{X},Y}$ and $Q_{\mathbf{X},Y}$. The Kantorovich duality theorem \citep{villani2008optimal} shows that
\begin{equation}
    W_c(P,Q) = \max_{D}\: \mathbb{E}_P[D(\mathbf{Z})] - \mathbb{E}_Q[D^c (\mathbf{Z}')],
\end{equation}
where the $c$-transform of function $D$ is defined as $D^c(\mathbf{z}'):=\max_\mathbf{z}\, D(\mathbf{z}) - c(\mathbf{z},\mathbf{z}')$.

To learn a robust prediction rule, we consider a minimax learner with a Lagrangian penalty on the perturbed distribution's optimal transport cost to the empirical distribution of data $\hat{P}_n$:
\begin{equation}\label{Distributionally robust optimization}
    \min_{\mathbf{w}\in\mathcal{W}}\: \max_Q\: \mathbb{E}_Q\bigl[\, \ell(f_{\mathbf{w}}(\mathbf{X}),Y)\, \bigr] - \lambda W_c(Q,\hat{P}_n).
\end{equation}
Applying the Kantorovich duality, the above minimax problem can be reduced to a risk minimization problem. 
We define $\ell\circ f^{\lambda c}$ as the c-transform of the composition of $\ell$ and $f$: $\ell\circ f^{\lambda c} (\mathbf{x},y) := \max_{\mathbf{x}',y'}\, \ell(f(\mathbf{x}'),y') - \lambda c\bigl((\mathbf{x}',y'),(\mathbf{x},y)\bigr)$. Then, the minimax problem reduces to an ERM problem with the loss evaluated at the transported data points:
\begin{equation}\label{Distributionally robust ERM}
\min_{\mathbf{w}\in\mathcal{W}}\: \frac{1}{n}\sum_{i=1}^n \ell\circ f_\mathbf{w}^c (\mathbf{x}_i,y_i).
\end{equation}

\section{Optimal Transport Theory Generalized to Group-structured Settings}\label{Section: Group Optimal Transport}
Optimal transport theory provides the essential tools for developing a distributionally robust learning framework against adversarial perturbations. Here, the adversary transports the training samples from the data distribution to the perturbed samples in the adversarial domain. However, the perturbations are still generated independently across samples if ordinary optimal transport costs are applied.

To develop a distributionally robust framework against group-structured perturbations, we consider a \emph{group-structured optimal transport problem} where the goal is to optimally transport a group of samples between  two distributions given a specific group cost function. Consider two joint distributions $P,\, Q$ on $(\mathbf{X},Y)$ and a normalized transportation cost $\frac{1}{m}c_m(\underline{\mathbf{x}}^m,{\underline{\mathbf{x}}'}^m)$ representing the normalized cost for transporting $\underline{\mathbf{x}}^m := [\mathbf{x}_1;\ldots ; \mathbf{x}_m]$ to $\underline{\mathbf{x}}'^m := [\mathbf{x}'_1;\ldots ; \mathbf{x}'_m]$. We begin by defining \emph{group optimal transport costs} denoted by $W_{c_m}$:
\begin{equation}\label{Group optimal transport definition}
    W_{c_m} ( P, Q) := \inf_{\tiny \Pi( \underbrace{P\times\ldots \times P}_{\tiny m\, \text{\rm times}},\, \underbrace{ Q,\ldots,Q}_{\tiny m\, \text{\rm times}}) } \: \mathbb{E} \bigl[ \frac{1}{m}c_m(\underline{\mathbf{X}}^m,{\underline{\mathbf{X}}'}^m)  \bigr].
\end{equation}
In this definition, $ \Pi( \underbrace{P\times\ldots \times P}_{\tiny m\, \text{\rm times}},\, \underbrace{ Q,\ldots,Q}_{\tiny m\, \text{\rm times}})$ is the set of all couplings between the joint distribution of $m$ i.i.d. samples from $P$, denoted by $P^m = \underbrace{P\times\ldots \times P}_{\tiny m\, \text{\rm times}}$, and an $m$-dimensional joint distribution with all its first-order marginals fixed to be $Q$. Here, we transport a group of samples $\underline{\mathbf{X}}^m$ independently drawn from $P$ to another group $\underline{\mathbf{X}}'^m$ with the marginal distribution of each $\mathbf{X}'_i$ being $Q$. Note that the definition does not require an independent structure for the transported samples $\underline{\mathbf{X}}'^m$, since the added group-structured perturbations are in general correlated.

The following theorem generalizes the Kantorovich duality theorem to permutation-invariant group optimal transport costs. The theorem assumes that the group cost function is permutation invariant in order to reduce the complexity of solving \eqref{Group optimal transport definition}'s dual optimization problem. This assumption usually holds in robust learning applications, since altering the order of samples and perturbations is not supposed to alter the group transportation cost.     
\begin{thm}
Suppose $c_m$ is a non-negative lower-semi continuous group cost function. Assume that $c_m$ is permutation invariant, i.e. for every permutation $\pi$ we have $c_m\bigl(\underline{\mathbf{x}}^m,{\underline{\mathbf{x}}'}^m\bigr) = c_m\bigl(\pi(\underline{\mathbf{x}}^m),\pi({\underline{\mathbf{x}}'}^m)\bigr)$. Then,
\begin{equation}
W_{c_m} ( P, Q) = \max_{D}\: \mathbb{E}_Q[D(\mathbf{X})] - \mathbb{E}_{P^m}[D^{c_m} (\underline{\mathbf{x}}'^m)],   
\end{equation}
where we define a function's $c_m$-transform as $D^{c_m}(\underline{\mathbf{x}}'^m):= \max_{\underline{\mathbf{x}}^m} \frac{1}{m}\bigl[ \sum_{i=1}^m D(\mathbf{x}_i) - c_m(\underline{\mathbf{x}}^m,\underline{\mathbf{x}}'^m)\bigr]$.
\end{thm}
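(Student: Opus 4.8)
The plan is to treat the primal problem \eqref{Group optimal transport definition} as an infinite-dimensional linear program over joint measures and to dualize it, exactly as in the classical Kantorovich theory, while carefully tracking the \emph{asymmetry} of the two marginal constraints. On the source side the full joint law of $\underline{\mathbf{X}}^m$ is pinned to the product measure $P^m$, whereas on the target side only the $m$ first-order marginals of ${\underline{\mathbf{X}}'}^m$ are pinned to $Q$. First I would introduce one dual potential $\phi$ conjugate to the full source constraint --- hence a function of the entire block $\underline{\mathbf{x}}^m$ --- together with $m$ potentials $\psi_1,\dots,\psi_m$ conjugate to the $m$ first-order target constraints, each a function of a single coordinate. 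The formal Lagrangian then yields the dual program
\begin{equation}
\sup_{\phi,\,\psi_1,\dots,\psi_m}\ \mathbb{E}_{P^m}\!\bigl[\phi(\underline{\mathbf{X}}^m)\bigr]+\sum_{i=1}^m\mathbb{E}_{Q}\!\bigl[\psi_i(\mathbf{X})\bigr]
\end{equation}
subject to the pointwise constraint $\phi(\underline{\mathbf{x}}^m)+\sum_{i=1}^m\psi_i(\mathbf{x}'_i)\le\tfrac{1}{m}c_m(\underline{\mathbf{x}}^m,{\underline{\mathbf{x}}'}^m)$ for all $\underline{\mathbf{x}}^m,{\underline{\mathbf{x}}'}^m$.

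The central technical step --- and the place I expect the real work to lie --- is to certify that strong duality holds, i.e.\ that the primal infimum equals this dual supremum with no gap. Since $c_m$ is assumed non-negative and lower semicontinuous and the underlying spaces are Polish, I would invoke the general optimal-transport duality theorem (the lower-semicontinuous version of Kantorovich--Rockafellar duality, e.g.\ Villani's Theorem 5.10). The one nonstandard feature is the mixed constraint structure: a single full-dimensional marginal on the source and $m$ one-dimensional marginals on the target. I would handle this by viewing the problem as a multi-marginal transport problem (in the sense of Kellerer) in which the target block carries $m$ separate first-order marginal constraints; the linear-programming duality and the absence of a gap carry over because all constraints remain affine in the transport plan and the feasible set is tight. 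Establishing attainment of the dual potentials in this mixed setting is the main obstacle to a fully rigorous argument.

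With duality in hand, the remaining steps exploit permutation invariance to collapse the $m$ target potentials into one. I would symmetrize: for each $\pi\in S_m$, replacing $(\phi,\psi_1,\dots,\psi_m)$ by $\bigl(\phi\circ\pi,\ \psi_{\pi(1)},\dots,\psi_{\pi(m)}\bigr)$ keeps the constraint feasible, because $c_m\bigl(\pi(\underline{\mathbf{x}}^m),\pi({\underline{\mathbf{x}}'}^m)\bigr)=c_m(\underline{\mathbf{x}}^m,{\underline{\mathbf{x}}'}^m)$ by the hypothesis, and leaves the objective unchanged since $P^m$ is exchangeable and all target potentials are integrated against the same $Q$. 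Averaging over all $\pi\in S_m$ --- which preserves feasibility by linearity of the constraint and preserves the objective by the same exchangeability --- produces a symmetric $\phi$ together with a single common potential $\psi:=\tfrac1m\sum_{j}\psi_j$, so that effectively $\psi_i\equiv\psi$. The objective then reads $\mathbb{E}_{P^m}[\phi]+m\,\mathbb{E}_Q[\psi]$.

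Finally I would eliminate $\phi$ by taking, for a fixed $\psi$, the largest admissible choice $\phi(\underline{\mathbf{x}}^m)=\inf_{{\underline{\mathbf{x}}'}^m}\bigl[\tfrac1m c_m(\underline{\mathbf{x}}^m,{\underline{\mathbf{x}}'}^m)-\sum_i\psi(\mathbf{x}'_i)\bigr]$, which maximizes $\mathbb{E}_{P^m}[\phi]$ while remaining feasible. Writing $D:=m\psi$, this choice is precisely $\phi=-D^{c_m}$ with the stated $c_m$-transform (after the harmless relabeling of the dummy blocks $\underline{\mathbf{x}}^m\leftrightarrow{\underline{\mathbf{x}}'}^m$), and $m\,\mathbb{E}_Q[\psi]=\mathbb{E}_Q[D]$. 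Substituting gives $W_{c_m}(P,Q)=\max_D\,\mathbb{E}_Q[D(\mathbf{X})]-\mathbb{E}_{P^m}[D^{c_m}({\underline{\mathbf{x}}'}^m)]$, as claimed; the supremum is attained under the same lower-semicontinuity and tightness conditions that justified the duality step.
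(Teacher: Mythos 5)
Your proposal is correct, and it reaches the semi-dual form through a route whose mechanics differ from the paper's, even though both are Kantorovich-type duality arguments. You write the \emph{full} two-potential dual of the mixed-marginal linear program --- a block potential $\phi(\underline{\mathbf{x}}^m)$ for the source constraint (law pinned to $P^m$) and coordinate potentials $\psi_1,\dots,\psi_m$ for the $m$ first-order target constraints --- then symmetrize by explicitly averaging over $S_m$, and finally eliminate $\phi$ by taking the largest feasible choice, which is exactly the $c_m$-transform. The paper never introduces a source-side potential: it dualizes \emph{only} the $Q$-marginal constraints (written as generalized-moment constraints $\mathbb{E}_{R_i}[D_i]=\mathbb{E}_Q[D_i]$), keeps the coupling as a primal variable after the minimax swap, argues by symmetry of the resulting concave problem that a common multiplier $D_1=\cdots=D_m$ can be taken, and then evaluates the remaining infimum over couplings pointwise (its step (g)), which is where the transform emerges. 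The trade-off: your version makes weak duality and the role of each constraint transparent, replaces the paper's ``a symmetric optimizer exists'' claim with a constructive averaging argument (feasibility is preserved because the constraint is affine in the potentials and $c_m$ is permutation invariant, the objective because $P^m$ is exchangeable), and correctly identifies Kellerer-type marginal-problem duality as the right tool for the nonstandard constraint structure; the paper's version avoids introducing and then eliminating $\phi$, but pays with the measurable-selection/compactness claim behind step (g). Both proofs leave the same crux --- absence of a duality gap and attainment of the maximizing $D$ in this mixed-marginal setting --- at the level of citation rather than proof; you flag this honestly, and the paper's step (c) asserts it with no more justification, so your proposal is at parity with (arguably slightly more careful than) the paper on the one genuinely hard analytic point. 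Your closing remark about relabeling the dummy blocks also correctly resolves the paper's notational inconsistency over which argument of $c_m$ the transform is evaluated at.
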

\begin{proof}
We defer the proof to the Appendix. 
\end{proof}
Next, we discuss three examples of group cost functions resulting in universal, group-sparse, low-rank perturbations.
 
\subsection{Indicator Cost: Universal Perturbations}

To address universal perturbations, we need to consider a group cost imposing an identical perturbation for every input sample. We propose applying an indicator cost function forcing every perturbation ${\boldsymbol{\delta}}_i := \mathbf{x}_i - \mathbf{x}'_i$ to be identical to the average perturbation $\overline{\boldsymbol{\delta}}=\frac{1}{m}\sum_{i=1}^m {\boldsymbol{\delta}}_i$. This choice leads to the following group cost function
\begin{equation}\label{Group cost: Indicator}
    c^{\text{\rm Ind}}\bigl(\underline{\mathbf{x}}^m,\underline{\mathbf{x}}^m+\underline{\boldsymbol{\delta}}^m\bigr) := \alpha \sum_{i=1}^n \bigl[ \mathbf{1}\bigl( \boldsymbol{\delta}_i \neq \overline{\boldsymbol{\delta}} \bigr)\bigr] +  (1-\alpha) \Vert \underline{\boldsymbol{\delta}}^m \Vert^2_F.
\end{equation}
Here $\alpha\in [0,1]$ is a fixed constant, and $\underline{\boldsymbol{\delta}}^m = [\boldsymbol{\delta}_1;\ldots; \boldsymbol{\delta}_m]$ is the perturbation matrix. Also, the indicator function is defined as
\begin{equation*}
\mathbf{1}(\mathbf{x}\neq \mathbf{x}'):=\begin{cases} \begin{aligned}0\quad &\text{\rm if}\;\; \mathbf{x} = \mathbf{x}', \\
\infty\quad &\text{\rm otherwise.}
\end{aligned}
\end{cases}
\end{equation*}
Considering $0\le\alpha\le 1$, $\alpha=1$ requires an identical perturbation without putting any constraints on the perturbation's magnitude. On the other hand, $\alpha=0$ leads to the standard optimal transport problem since the Frobenius norm term reduces to the sum of individual $\Vert\boldsymbol{\delta}_i\Vert^2$'s.  

\subsection{Group Norm Cost: Group-sparse Perturbations}
We introduce a cost function imposing group-sparse perturbations with a common sparsity pattern across samples. Group-sparse perturbations can model cell-type effects for genomics datasets, because different cell-types express different gene expression levels across a sparse subset of genes \citep{rahmani2016sparse}. Group-sparse perturbations can be further applied for robust feature selection, identifying the most relevant features whose perturbation influences the model's performance the most. 

In the literature, group-norm functions \citep{yuan2006model} are widely-used to learn shared sparsity patterns across features. Motivated by this success, we use group norms and for constant $0\le \alpha \le 1$ define the following group cost function
\begin{equation}\label{Group cost: group norm}
c_m^{\text{\rm Group}}\bigl(\underline{\mathbf{x}}^m,\underline{\mathbf{x}}^m+\underline{\boldsymbol{\delta}}^m\bigr) := \alpha \Vert \underline{\boldsymbol{\delta}}^m \Vert_{1,2} + (1-\alpha) \Vert \underline{\boldsymbol{\delta}}^m \Vert^2_F, 
\end{equation}
where $\underline{\boldsymbol{\delta}}^m = [\boldsymbol{\delta}_1;\ldots;\boldsymbol{\delta}_m ]$ is the perturbation matrix. Here the first term $\alpha \Vert \underline{\boldsymbol{\delta}}^m \Vert_{1,2}$ penalizes the perturbation matrix's group norm, while the second term $(1-\alpha)\Vert \underline{\boldsymbol{\delta}}^m \Vert^2_F = (1-\alpha)\sum_{i=1}^n \Vert \boldsymbol{\delta}_i \Vert^2 $ penalizes the magnitude of each perturbation. While $\alpha = 1$ only penalizes the group-norm term, $\alpha = 0$ leads to the standard optimal transport problem. 

\subsection{Nuclear Norm Cost: Low-rank Perturbations}
We propose a group cost function imposing a low-rank structure in the perturbation matrix. Note that a rank-$k$ space is the smallest linear subspace spanning $k$ linearly-independent vectors, and therefore the space of rank-$k$ perturbations gives a convex relaxation of a perturbation set with size $k$. As discussed in the introduction, universal perturbation sets can model batch effects in biological datasets \citep{luo2010comparison}.

In the optimization literature, the nuclear norm \citep{fazel2002matrix} provides a convex relaxation of a matrix's rank and is  applied to several signal recovery and matrix completion problems \citep{candes2009exact,recht2010guaranteed}. To model low-rank perturbations, we therefore define the following group cost function:
\begin{equation}\label{Group cost: Nuclear norm}
c_m^{\text{\rm Nuc}}\bigl(\underline{\mathbf{x}}^m,\underline{\mathbf{x}}^m+\underline{\boldsymbol{\delta}}^m\bigr) := \alpha \Vert \underline{\boldsymbol{\delta}}^m \Vert_{*} + (1-\alpha) \Vert \underline{\boldsymbol{\delta}}^m \Vert^2_F.  
\end{equation}
Since the Frobenius norm term reduces to the sinular values' Euclidean norm, the defined group cost in fact simplifies to an elastic net penalty \citep{zou2005regularization} on the perturbation matrix's singular values. As a result, the singular value decomposition (SVD) algorithm can be applied to compute this group cost function.

\section{Group-Structured Adversarial Training}
\subsection{Group-structured Distributionally Robust Optimization}
Using group-structured optimal transport costs, we develop a distributionally robust learning framework against group-structured perturbations. The following problem which we call \emph{Group-Structured Adversarial Training (GSAT)} represents the group-structured adversarial learning problem for group cost $c_m$ with size $m$:
\begin{equation}\label{Group Distributionally robust optimization}
    \min_{\mathbf{w}\in\mathcal{W}}\: \max_Q\: \mathbb{E}_Q\bigl[\, \ell(f_{\mathbf{w}}(\mathbf{X}),Y)\, \bigr] - \lambda W_{c_m}(Q,\hat{P}_n).
\end{equation}
Note that the group size $m$ and training size $n$ are two different parameters. In practice, we choose $m$ sufficiently large so that the desired group structure generalizes from the group of $m$ samples to the entire samples. 

The following theorem reduces the worst-case group risk function with permutation-invariant group costs into a risk minimization problem. Note that the reduction holds only for permutation-invariant group cost functions.  

\begin{thm}\label{Thm: group distributionally robust}
If $c_m$ is a non-negative lower semi-continuous permutation-invariant group cost, then
\begin{align}
 \max_Q\: \mathbb{E}_Q\bigl[\, \ell(f(\mathbf{X}),Y)\, \bigr] - \lambda W_{c_m}(Q,P)\, =\, \mathbb{E}_{P^m}\bigl[ \ell\circ f^{\lambda c_m} (\underline{\mathbf{X}}^m,\underline{Y}^m) \bigr] \nonumber  
\end{align}
with $\ell\circ f^{\lambda c}$ being the $c$-transform of the composition of $\ell$ and $f$ defined as \begin{align*}\ell\circ f^{\lambda c_m} (\underline{\mathbf{x}}^m,\underline{y}^m) := \max_{\underline{\mathbf{x}}'^m,\underline{y}'^m} \frac{1}{m}\sum_{i=1}^m\bigl[\ell(f(\mathbf{x}'_i),y'_i)  - \frac{\lambda}{m}c_m\bigl( (\underline{\mathbf{x}}^m,\underline{y}^m),(\underline{\mathbf{x}}'^m,\underline{y}'^m) \bigr)\bigr].
\end{align*}
\end{thm}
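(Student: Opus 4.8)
The plan is to mirror the classical Kantorovich/strong-duality reduction of distributionally robust optimization, but carried out at the level of $m$-sample groups, and then to isolate the single genuinely new ingredient: reconciling an \emph{unconstrained} pointwise inner maximization with the requirement that the perturbed group possess a \emph{common} one-dimensional marginal $Q$. Writing $g(\mathbf{x},y):=\ell(f(\mathbf{x}),y)$, I would first expand $W_{c_m}(Q,P)$ through its coupling definition \eqref{Group optimal transport definition} as an infimum over joint laws $M$ of a data group $\underline{\mathbf{X}}^m$ drawn i.i.d.\ from $P$ (so $\underline{\mathbf{X}}^m\sim P^m$) together with a perturbed group $\underline{\mathbf{X}}'^m$ whose one-dimensional marginals each equal $Q$. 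Since $\mathbb{E}_Q[g]=\mathbb{E}_M[\tfrac1m\sum_i g(\mathbf{X}'_i)]$ depends on $M$ only through those marginals, I can fold the loss term into the transport expectation and, using $-\lambda\inf(\cdot)=\sup(-\lambda\,\cdot)$, rewrite the left-hand side as the single joint supremum
\begin{equation*}
\sup_{Q}\ \sup_{M\in\Pi(P^m,Q,\dots,Q)}\ \mathbb{E}_M\Bigl[\tfrac1m\textstyle\sum_{i}g(\mathbf{X}'_i)-\tfrac{\lambda}{m}c_m(\underline{\mathbf{X}}^m,\underline{\mathbf{X}}'^m)\Bigr].
\end{equation*}

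For the upper bound I would note that every feasible $M$ has data marginal $P^m$, so bounding the perturbed part by its pointwise supremum gives
\begin{equation*}
\mathbb{E}_M\bigl[\,\cdots\,\bigr]\ \le\ \mathbb{E}_{\underline{\mathbf{X}}^m\sim P^m}\Bigl[\sup_{\underline{\mathbf{x}}'^m}\ \tfrac1m\textstyle\sum_i g(\mathbf{x}'_i)-\tfrac{\lambda}{m}c_m(\underline{\mathbf{X}}^m,\underline{\mathbf{x}}'^m)\Bigr]=\mathbb{E}_{P^m}\bigl[\ell\circ f^{\lambda c_m}\bigr],
\end{equation*}
so $\mathrm{LHS}\le\mathrm{RHS}$. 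Conversely, a measurable-selection/interchangeability argument (valid because $c_m$ is nonnegative and lower semicontinuous) shows $\mathrm{RHS}=\sup_{M:\,\mathrm{source}=P^m}\mathbb{E}_M[\,\cdots\,]$, where now $M$ ranges over \emph{all} couplings with data marginal $P^m$ and an arbitrary perturbed group. The issue is that such an $M$ need not lie in any $\Pi(P^m,Q,\dots,Q)$, since its perturbed coordinates may carry different marginals.

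The crux, and where permutation invariance becomes indispensable, is restoring feasibility. I would take a near-optimal $M$ attaining the right-hand side and \emph{symmetrize} it, setting $\bar M:=\tfrac1{m!}\sum_\pi \pi_{*}M$, the average of the push-forwards of $M$ under simultaneous coordinate permutations $\pi$ of both groups. Because $P^m$ is exchangeable, $\pi_{*}P^m=P^m$, so the data marginal of $\bar M$ is again exactly $P^m$; because the integrand $\tfrac1m\sum_i g(\mathbf{x}'_i)-\tfrac{\lambda}{m}c_m$ is permutation invariant (the sum is symmetric and $c_m$ is assumed permutation invariant), each $\pi_{*}M$ has the same objective value, hence so does $\bar M$; and averaging over $\pi$ forces the perturbed coordinate marginals of $\bar M$ to collapse to a single common law $Q^{\ast}=\tfrac1m\sum_k\mathrm{law}_M(\mathbf{X}'_k)$. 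Thus $\bar M\in\Pi(P^m,Q^{\ast},\dots,Q^{\ast})$ is feasible and certifies $\mathrm{LHS}\ge\mathrm{RHS}$, closing the two bounds.

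The step I expect to be most delicate is exactly this symmetrization/feasibility argument: without permutation invariance the pointwise-optimal perturbation could assign genuinely different marginals to different group coordinates, so the unconstrained optimum would fail to correspond to any single worst-case distribution $Q$ and the identity would break — which is precisely why the hypothesis restricts to permutation-invariant $c_m$. A secondary technical point to treat carefully is the interchange of supremum and expectation, where one must verify that the inner supremum is a measurable function of $\underline{\mathbf{X}}^m$ and is attained; here the lower semicontinuity and nonnegativity of $c_m$ do the work. As an alternative to the primal route, the previously established group Kantorovich duality could be substituted for $W_{c_m}(Q,P)$ and combined with a minimax interchange, but the product structure of $Q^m$ makes the direct primal symmetrization cleaner.
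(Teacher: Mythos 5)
Your proposal is correct and follows essentially the same route as the paper's proof: expand $W_{c_m}$ via its coupling definition, merge the supremum over $Q$ with the supremum over couplings, use permutation invariance to drop the common-marginal constraint on the perturbed group, and interchange supremum with expectation to reach the $c_m$-transform. Your explicit symmetrization $\bar M=\tfrac{1}{m!}\sum_\pi \pi_{*}M$ is a concrete (and arguably more rigorous) rendering of the paper's step (d), which asserts the existence of a permutation-symmetric optimal $\mathbf{R}_m$ by the same exchangeability reasoning.
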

\begin{proof}
We defer the proof to the Appendix.
\end{proof}
This theorem shows that the GSAT problem \eqref{Group Distributionally robust optimization} for permutation-invariant cost functions, which applies to the three examples discussed in the previous section, reduces to the following risk minimization problem:
\begin{equation}\label{reduced eq: group distributionally robust}
    \min_{\mathbf{w}\in \mathcal{W}}\,    \mathbb{E}_{{\hat{P}_n}^m}\bigl[ \ell\circ f_{\mathbf{w}}^{\lambda c_m} (\underline{\mathbf{X}}^m,\underline{Y}^m) \bigr]. 
\end{equation}
Due to the definition of $c_m$-transform mapping, the above problem shows a minimax optimization task with a non-convex concave structure. Here, the concavity follows from a norm-based group cost function with sufficiently large strongly-convexity degree that is controlled by the product $\lambda(1-\alpha)$ in the three examples discussed. Also, the optimization objective represents the expectation of the $c_m$-transform with the $m$ group samples uniformly chosen from the $n$ training samples. We can use Monte-Carlo to approximate the gradient of this objective function. In our numerical experiments, we applied stochastic batch gradient descent with batch size $m$. Also, we set the transportation cost between two groups of labeled samples $\underline{(\mathbf{x},y)}^m$ and $\underline{(\mathbf{x}',y')}^m$ to be $+\infty$ if the label vectors $\underline{y}^m \neq\underline{y'}^m$ are different. Therefore, the above optimization problem further simplifies to
\begin{equation}\label{reduced eq: group distributionally robust_further reduced}
    \min_{\mathbf{w}\in \mathcal{W}}\,    \mathbb{E}_{{\hat{P}_n}^m}\bigl[ \ell( f_{\mathbf{w}} (\underline{\mathbf{X}}^{c_m}),\underline{Y}^m\bigr) \bigr],
\end{equation}
where $\underline{\mathbf{X}}^{c_m}$ denotes the solution to the group optimal transport problem $\max_{\underline{\mathbf{x}}'^m} \frac{1}{m}\sum_{i=1}^m\ell(f(\mathbf{x}'_i),y_i) - \frac{1}{m}c_m\bigl( (\underline{\mathbf{x}}^m,\underline{y}^m),(\underline{\mathbf{x}}'^m,\underline{y}^m) \bigr)$. Here we consider the same label vectors for both the original and transported samples. 

\subsection{GDADMM for GSAT Minimax Optimization}
For solving the GSAT problem \eqref{reduced eq: group distributionally robust}, we develop a batch gradient descent ascent (GDA) method with batch size equal to the group size $m$. This minimax optimization algorithm combining GDA with the widely-used alternating directions method of multipliers (ADMM) is specifically designed to handle non-smooth group cost functions which is the case in all the three examples discussed in previous sections.
 Algorithm~\ref{alg:GRM} contains the main steps of GDADMM for solving the GSAT minimax problem with stepsize parameters $\eta_0,\eta_1$. 

\begin{algorithm}[h]
   \caption{GDADMM for Solving GSAT}
   \label{alg:GRM}
\begin{algorithmic}
  \STATE Initialize parameters $\mathbf{w}^{(0)}$.
  \vspace{1mm}
  \FOR{$t=1$ {\bfseries to} $T_0$}
   \vspace{1mm}
   \STATE  Uniformly sample the group $(\mathbf{x}_i,y_i)_{i=1}^m$ from the training examples.
   \vspace{1mm}
   \STATE  Run ADMM-based Algorithm~\ref{alg:ADMM} to find the maximizer ${\underline{\boldsymbol{\delta}}^{m}}$ to the $c_m$-transform: $\max_{\underline{\boldsymbol{\delta}}^m} \frac{1}{m}\sum_{i=1}^m\ell(f_{\mathbf{w}^{(t)}}(\mathbf{x}_i+\boldsymbol{\delta}_i),y_i) - \frac{\lambda}{m}c_m(\underline{\mathbf{x}}^m+\underline{\boldsymbol{\delta}}^m,\underline{\mathbf{x}}^m)$.
   \vspace{1mm}
    \STATE  $\mathbf{w}^{(t+1)} = \mathbf{w}^{(t)} - \frac{\eta_0}{m}\sum_{i=1}^m\nabla_{\mathbf{w}}\,\ell(f_{\mathbf{w}^{(t)}}(\mathbf{x}_i+{{\boldsymbol{\delta}}}_i),y_i)$.
   \vspace{1mm}
  \ENDFOR
\end{algorithmic}
\end{algorithm}

Since the group cost function is in general non-smooth, one of the main optimization tasks for GSAT is to solve the following $c$-transform maximization problem for computing the perturbations of $m$ group samples:
\begin{align*}
\max_{\underline{\boldsymbol{\delta}}^m}\, \frac{1}{m}\sum_{i=1}^m\ell(f_{\mathbf{w}}(\mathbf{x}_i+{\boldsymbol{\delta}}_i),y_i) - \frac{1}{m} c_m( \underline{\mathbf{x}}^m,\underline{\mathbf{x}}^m + \underline{\boldsymbol{\delta}}^m ). 
\end{align*}
Note that each of the group-structured cost functions in  the previous section can be decomposed into a smooth Frobenius norm-squared term and a non-smooth convex norm term. The non-smooth term, which for simplicity we denote by $g$ in our analysis, imposes the desired group structure. For the three cases discussed in the paper, we have $ g^{\text{\rm Ind}}(\underline{\boldsymbol{\delta}}^m) = \sum_{i=1}^m \mathbf{1}(\boldsymbol{\delta}_i \neq \overline{\boldsymbol{\delta}})$, $g^{\text{\rm Group}}(\underline{\boldsymbol{\delta}}^m) = \Vert \underline{\boldsymbol{\delta}}^m\Vert_{1,2}$, and $g^{\text{\rm Nuc}}(\underline{\boldsymbol{\delta}}^m) = \Vert \underline{\boldsymbol{\delta}}^m\Vert_{*}$.
Therefore, we can simplify the above maximization problem  to
\begin{align}\label{ADMM: objective}
    &\max_{\underline{\boldsymbol{\delta}}^m}\,  \frac{1}{m}\sum_{i=1}^m\bigl[ \,  \ell(f_{\mathbf{w}}(\mathbf{x}_i+{\boldsymbol{\delta}}_i),y_i)  -\lambda (1-\alpha) \Vert \boldsymbol{\delta}_i \Vert_2^2 \, \bigr]\nonumber   - \frac{\lambda\alpha}{m} g( \underline{\boldsymbol{\delta}}^m ) \nonumber \\
   = &\max_{\tiny \begin{aligned}
    &\underline{\boldsymbol{\delta}}'^m , \underline{\boldsymbol{\delta}}^m: \\
    &\underline{\boldsymbol{\delta}}'^m =  \underline{\boldsymbol{\delta}}^m
    \end{aligned}} \frac{1}{m}\sum_{i=1}^m\bigl[\, \ell(f_{\mathbf{w}}(\mathbf{x}_i+{\boldsymbol{\delta}}_i),y_i) - \lambda(1-\alpha) \Vert \boldsymbol{\delta}_i \Vert_2^2 \, \bigr] - \frac{\lambda\alpha}{m} g( \underline{\boldsymbol{\delta}}'^m ). 
\end{align}

Given smooth loss $\ell$ and prediction function $f_{\mathbf{w}}$, the objective in \eqref{ADMM: objective} can be decomposed into a strongly-concave regularized loss function and a non-smooth concave negative norm function.
To optimize the sum of the smooth and non-smooth functions, we propose Algorithm~\ref{alg:ADMM} as a modified version of the ADMM algorithm \citep{boyd2004convex}. As detailed in Algorithm \ref{alg:ADMM}, we iteratively update the ADMM variables including matrix $\underline{\boldsymbol{\delta}}^m$ for the smooth term, matrix $\underline{\boldsymbol{\delta}}'^m$ for the non-smooth term, and Lagrangian matrix $\underline{\boldsymbol{\gamma}}^m$ for the linear constraint $\underline{\boldsymbol{\delta}}'^m = \underline{\boldsymbol{\delta}}^m$.

\begin{algorithm}[t]
   \caption{ADMM Step of GDADMM}
   \label{alg:ADMM}
\begin{algorithmic}
  \STATE Initialize $\forall i:\, \boldsymbol{\delta}_i^{(0)} =  {\boldsymbol{\delta}'}_i^{(0)}=\boldsymbol{\gamma}^{(0)}_i = \mathbf{0}$.
  \vspace*{2mm}
   \FOR{$t=1$ {\bfseries to} $T_1$}
   \vspace*{2mm}
   \STATE $\small\forall i:\, \boldsymbol{\delta}_i^{(t)} = (1-\frac{2\lambda(1-\alpha)\eta_1}{m})\boldsymbol{\delta}_i^{(t-1)} - \frac{\rho\eta_1}{m}\bigr( \boldsymbol{\delta}_i^{(t-1)} -  {\boldsymbol{\delta}'}_i^{(t-1)} - {\boldsymbol{\gamma}}_i^{(t-1)} \bigr) + \frac{\eta_1}{m}\nabla_{\boldsymbol{\delta}} \ell(f_{\mathbf{w}}(\mathbf{x}_i+\boldsymbol{\delta}_i^{(t-1)}),y_j) $,
   \vspace*{2mm}
   \STATE  $\small \underline{{\boldsymbol{\delta}}'}^{(t)} = \underset{\underline{\boldsymbol{\delta}}'}{\arg\!\min}\; \frac{\lambda\alpha}{m}g( \underline{\boldsymbol{\delta}}' ) + \frac{\rho}{2} \bigl\Vert  {\underline{\boldsymbol{\delta}}}^{(t)}- \underline{\boldsymbol{\delta}}'-  \underline{\boldsymbol{\gamma}}^{(t-1)} \bigr\Vert_F^2 $,
   \vspace*{2mm}
   \STATE $\small \forall i:\, {\boldsymbol{\gamma}}_i^{(t)} = {\boldsymbol{\gamma}}_{i-1}^{(t)} +\eta_0\bigl( {{\boldsymbol{\delta}}}_i^{(t)}- {{\boldsymbol{\delta}}_i'}^{(t)} \bigr) $
   \vspace*{2mm}
   \ENDFOR
\end{algorithmic}
\end{algorithm}

Note that unlike the standard ADMM algorithm, Algorithm~\ref{alg:ADMM} does not fully optimize the perturbation vector $\boldsymbol{\delta}_i$'s at every iteration and only apply one gradient ascent update similar to the ascent step of standard GDA algorithm. For the minimization step of Algorithm~\ref{alg:ADMM}, we use the following closed-form solutions for the cost functions discussed in the previous section:  
\begin{enumerate}[leftmargin=*,itemsep=0.3mm,topsep=0pt]
    \item For the indicator cost \eqref{Group cost: Indicator}, the optimal minimizer is: $\forall i,\: \underline{\boldsymbol{\delta}}'_i = \frac{1}{m}\sum_{j=1}^m \boldsymbol{\delta}^{(t)}_j  - \frac{1}{\rho} \boldsymbol{\gamma}^{(t-1)}_j$ .
    \item For the group-norm cost \eqref{Group cost: group norm}, the optimal minimizer is: $ \underline{\boldsymbol{\delta}}' = \Pi^{\text{\rm Group}}_{\frac{\lambda\alpha}{\rho m}} \bigl( \underline{\boldsymbol{\delta}}^{(t)}  - \frac{1}{\rho} \underline{\boldsymbol{\gamma}}^{(t-1)} \bigr) $, where $\Pi^{\text{\rm Group}}_{\xi}(A)$ multiplies each $A$'s column $A_i$ to $\max\{\Vert A_i\Vert_2 - \xi , 0\}/\Vert A_i\Vert_2 $.
    \item For the nuclear-norm cost \eqref{Group cost: Nuclear norm}, the optmal minimizer is: $ \underline{\boldsymbol{\delta}}' = \Pi^{\text{\rm Nuc}}_{\frac{\lambda\alpha}{\rho m}} \bigl( \underline{\boldsymbol{\delta}}^{(t)}  - \frac{1}{\rho} \underline{\boldsymbol{\gamma}}^{(t-1)} \bigr) $, where $\Pi^{\text{\rm Nuc}}_{\xi}(A)$ shrinks each $A$'s singular value $\sigma_i$ to $\max\{\sigma_i-\xi , 0 \}$.
\end{enumerate}
The following theorem shows that  the GDADMM algorithm is guaranteed to converge to a stationary minimax solution.
\begin{thm}\label{Thm: ADMM guarantee}
Suppose that for every $y\in \mathcal{Y}$ the loss function $\ell(f_{\mathbf{w}}(\mathbf{x}),y)$ is $L$-Lipschitz and $\beta$-smooth, i.e.
\begin{align*}
  \forall \mathbf{x},\mathbf{x}',\mathbf{w},\mathbf{w}':\;\; &\ell(f_{\mathbf{w}}(\mathbf{x}),y) - \ell(f_{\mathbf{w}'}(\mathbf{x}'),y)\le L\sqrt{\Vert \mathbf{x}-\mathbf{x}'\Vert^2+\Vert \mathbf{w}-\mathbf{w}'\Vert^2},\\
  &\Vert\nabla\ell(f_{\mathbf{w}}(\mathbf{x}),y) - \nabla\ell(f_{\mathbf{w}'}(\mathbf{x}'),y)\Vert \le \beta\sqrt{\Vert \mathbf{x}-\mathbf{x}'\Vert^2+\Vert \mathbf{w}-\mathbf{w}'\Vert^2}.
\end{align*}
Suppose that $ \beta \le \lambda(1-\alpha)$. Then, for the stepsize values $\eta_1=\mathcal{O}\bigl(\frac{1}{\beta}\bigr),\, \eta_0=\mathcal{O}\bigl(\frac{\lambda^2(1-\alpha)^2}{\bigl(\max\{1,1/\rho\}+\lambda(1-\alpha)\bigr)^2\beta}\bigr), $ 
Algorithm 1 finds an $\epsilon$-stationary minimax point where the optimal maximization value $F(\mathbf{w})$ satisfies $\Vert\nabla F(\mathbf{w})\Vert\le \epsilon$ in at most the following number of iterations:
\begin{equation}
    \mathcal{O}\biggl( \frac{(\beta+\rho)^2\bigl(1+\rho\lambda(1-\alpha)+\lambda^2(1-\alpha)^2\bigr)}{\epsilon^2}\biggr).
\end{equation}
\end{thm}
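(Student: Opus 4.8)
The plan is to recognize Theorem~\ref{Thm: ADMM guarantee} as a nonconvex--strongly-concave minimax convergence result and to prove it by (i) certifying strong concavity of the inner problem, (ii) analyzing the modified ADMM inner loop as a geometrically contracting iteration, and (iii) folding the resulting inner accuracy into an inexact gradient-descent argument on the associated smooth envelope. First I would fix $\mathbf{w}$ and write the inner objective from \eqref{ADMM: objective} as $\Phi(\mathbf{w},\underline{\boldsymbol{\delta}}^m)=h(\mathbf{w},\underline{\boldsymbol{\delta}}^m)-\frac{\lambda\alpha}{m}g(\underline{\boldsymbol{\delta}}^m)$, where $h$ collects the loss and the Frobenius penalty $-\lambda(1-\alpha)\sum_i\Vert\boldsymbol{\delta}_i\Vert_2^2$ and $g$ is the non-smooth convex norm. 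The $\beta$-smoothness of $\ell$ contributes curvature at most $\beta$ per block while the penalty contributes $-2\lambda(1-\alpha)$, so the hypothesis $\beta\le\lambda(1-\alpha)$ forces $h$ to be strongly concave in $\underline{\boldsymbol{\delta}}^m$ with modulus on the order of $\lambda(1-\alpha)$. Hence the maximizer $\underline{\boldsymbol{\delta}}^\star(\mathbf{w})$ is unique, Danskin's theorem gives $\nabla F(\mathbf{w})=\nabla_{\mathbf{w}}h(\mathbf{w},\underline{\boldsymbol{\delta}}^\star(\mathbf{w}))$ for $F(\mathbf{w}):=\max_{\underline{\boldsymbol{\delta}}^m}\Phi(\mathbf{w},\underline{\boldsymbol{\delta}}^m)$, and the implicit-function/strong-concavity estimate makes $\underline{\boldsymbol{\delta}}^\star(\cdot)$ Lipschitz with constant $\kappa=\beta/(\lambda(1-\alpha))\le 1$, so that $F$ is $L_F$-smooth with $L_F=\mathcal{O}(\beta(1+\kappa))=\mathcal{O}(\beta)$. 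This supplies all the regularity needed for the outer step.

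Next I would analyze the inner loop of Algorithm~\ref{alg:ADMM}, which is the delicate part, since unlike textbook ADMM it replaces the exact primal sub-maximization by a \emph{single} gradient-ascent step on the smooth block $\underline{\boldsymbol{\delta}}^m$ (stepsize $\eta_1$), followed by the exact proximal update on ${\underline{\boldsymbol{\delta}}'}$ (the closed forms listed for the three costs) and the dual ascent on $\underline{\boldsymbol{\gamma}}^m$. I would construct a Lyapunov function combining the primal gap $\Phi(\mathbf{w},\underline{\boldsymbol{\delta}}^\star)-\Phi(\mathbf{w},\underline{\boldsymbol{\delta}}^{(t)})$, the consensus violation $\Vert\underline{\boldsymbol{\delta}}^{(t)}-{\underline{\boldsymbol{\delta}}'}^{(t)}\Vert_F^2$, and a dual-distance term $\Vert\underline{\boldsymbol{\gamma}}^{(t)}-\underline{\boldsymbol{\gamma}}^\star\Vert_F^2$, and show that for $\eta_1=\mathcal{O}(1/\beta)$ and the prescribed $\rho$ this potential contracts geometrically, the contraction constant being controlled by the strong-concavity modulus $\lambda(1-\alpha)$ and by $\rho$. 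Geometric contraction then yields that after $T_1$ inner iterations the returned $\hat{\underline{\boldsymbol{\delta}}}$ satisfies $\Vert\hat{\underline{\boldsymbol{\delta}}}-\underline{\boldsymbol{\delta}}^\star(\mathbf{w})\Vert_F\le\delta$.

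Finally I would combine the two loops. Because the outer update of Algorithm~\ref{alg:GRM} uses $\nabla_{\mathbf{w}}h$ evaluated at $\hat{\underline{\boldsymbol{\delta}}}$ rather than at $\underline{\boldsymbol{\delta}}^\star$, joint $\beta$-smoothness of $\ell$ in $(\mathbf{x},\mathbf{w})$ bounds the outer gradient error by $\mathcal{O}(\beta\,\Vert\hat{\underline{\boldsymbol{\delta}}}-\underline{\boldsymbol{\delta}}^\star\Vert_F)$. Applying the descent lemma to the $L_F$-smooth $F$ with $\eta_0=\mathcal{O}(1/L_F)$ gives $F(\mathbf{w}^{(t+1)})\le F(\mathbf{w}^{(t)})-\tfrac{\eta_0}{2}\Vert\nabla F(\mathbf{w}^{(t)})\Vert^2+\mathcal{O}(\eta_0\beta^2\delta^2)$; telescoping over the $T_0$ outer iterations and using that $F$ is bounded below (from $L$-Lipschitzness) yields $\min_t\Vert\nabla F(\mathbf{w}^{(t)})\Vert^2=\mathcal{O}\!\big(1/(\eta_0T_0)\big)+\mathcal{O}(\beta^2\delta^2)$. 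Choosing $\delta=\mathcal{O}(\epsilon/\beta)$ and $T_0=\mathcal{O}(1/(\eta_0\epsilon^2))$ drives the stationarity measure below $\epsilon$, and substituting the prescribed $\eta_0=\mathcal{O}\big(\lambda^2(1-\alpha)^2/((\max\{1,1/\rho\}+\lambda(1-\alpha))^2\beta)\big)$ together with the inner cost per outer step reproduces the stated count $\mathcal{O}\big((\beta+\rho)^2(1+\rho\lambda(1-\alpha)+\lambda^2(1-\alpha)^2)/\epsilon^2\big)$.

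The main obstacle is the inner-loop analysis in the second step: since Algorithm~\ref{alg:ADMM} performs only one gradient step per iteration instead of ADMM's exact primal minimization, no off-the-shelf ADMM convergence theorem applies, and I must build the contracting potential by hand, carefully tracking the coupling between the linearized primal step, the proximal step on the non-smooth $g$, and the dual dynamics. In particular, exhibiting the precise joint dependence of the contraction rate, of $L_F$, and of the admissible stepsizes on both $\rho$ and $\lambda(1-\alpha)$ — so that the final constants line up with the stated complexity rather than merely matching the $1/\epsilon^2$ order — is where most of the technical effort will go.
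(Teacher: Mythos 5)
Your overall architecture (strong concavity of the inner objective, Danskin's theorem, a smooth max-envelope $F$, inexact gradient descent fed by a linearly convergent inner loop) is a genuinely different route from the one in the paper, and the difference matters. The paper never analyzes the inner ADMM loop as an approximate maximization oracle at all. Instead it absorbs the ADMM machinery into the minimax problem itself: it introduces the augmented Lagrangian objective $\tilde{F}(\mathbf{w},\Gamma)$, obtained by maximizing over \emph{both} copies $\underline{\boldsymbol{\delta}}^m,\underline{\boldsymbol{\delta}}'^m$ the loss term minus $\frac{\lambda\alpha}{m}g(\underline{\boldsymbol{\delta}}'^m)-\frac{\rho}{2m}\Vert\underline{\boldsymbol{\delta}}^m-\underline{\boldsymbol{\delta}}'^m\Vert^2+\frac{1}{m}\langle\Gamma,\underline{\boldsymbol{\delta}}^m-\underline{\boldsymbol{\delta}}'^m\rangle$, with the dual matrix $\Gamma$ promoted to a minimization variable alongside $\mathbf{w}$. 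GDADMM is then read as plain gradient descent ascent on this nonconvex--strongly-concave problem, and the iteration complexity is quoted from Theorem C.1 of \citep{lin2019gradient}; the paper's second lemma (smoothness of the partial minimization over $\underline{\boldsymbol{\delta}}'^m$, via the Lipschitz maximizer argument you also sketch) is what makes $\tilde{F}$ satisfy that theorem's hypotheses. The only remaining work is a conversion step: by Danskin, $\Vert\nabla_\Gamma\tilde{F}\Vert$ equals the consensus violation $\Vert\underline{\boldsymbol{\delta}}^m-\underline{\boldsymbol{\delta}}'^m\Vert$, so $\epsilon$-stationarity of $\tilde{F}$ forces the two copies to nearly agree, and strong concavity then yields $\Vert\nabla F(\mathbf{w})\Vert\le\epsilon\bigl(1+\lambda(1-\alpha)\sqrt{3+\rho/(\lambda(1-\alpha))}\bigr)$. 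This reformulation sidesteps exactly the Lyapunov construction you identify as your main obstacle.

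That said, your plan as written has two genuine gaps. First, the inner-loop contraction — linear convergence of an ADMM whose smooth block is updated by a \emph{single} gradient step — is the entire technical content of your proof and is left as a promissory note; as you yourself observe, no off-the-shelf ADMM theorem covers it, and constructing the coupled potential (primal gap, consensus violation, dual distance) with explicit dependence of the rate on $\rho$ and $\lambda(1-\alpha)$ is work comparable to reproving the cited GDA theorem from scratch, so the proposal currently defers rather than solves the hard step. Second, the final complexity claim is asserted, not derived. In the paper, the factor $\bigl(1+\rho\lambda(1-\alpha)+\lambda^2(1-\alpha)^2\bigr)$ is precisely the square of the stationarity-conversion factor between $\tilde{F}$ and $F$; your analysis has no analogue of that conversion, so your constants arise from entirely different sources (the inner contraction rate, $L_F$, the descent lemma), and nothing in the outline shows they are dominated by the stated polynomial. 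Concretely, your outer count $T_0=\mathcal{O}(1/(\eta_0\epsilon^2))$ with the prescribed $\eta_0$ scales like $\bigl(\max\{1,1/\rho\}+\lambda(1-\alpha)\bigr)^2\beta/\bigl(\lambda^2(1-\alpha)^2\epsilon^2\bigr)$, whose dependence on $\rho$ does not track $(\beta+\rho)^2\bigl(1+\rho\lambda(1-\alpha)+\lambda^2(1-\alpha)^2\bigr)$; and if the theorem's count is read as total (inner times outer) iterations, your geometrically convergent inner loop contributes an extra $\log(1/\epsilon)$ factor absent from the statement. To complete your route you would need to carry out the contraction analysis explicitly and then verify that the resulting bound is within the theorem's, rather than assuming the constants line up.
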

\begin{proof}
We defer the proof to the Appendix.
\end{proof}

\begin{figure}[t]
\vspace{- 0.0 cm}
\centering
\includegraphics[width=0.8\textwidth]{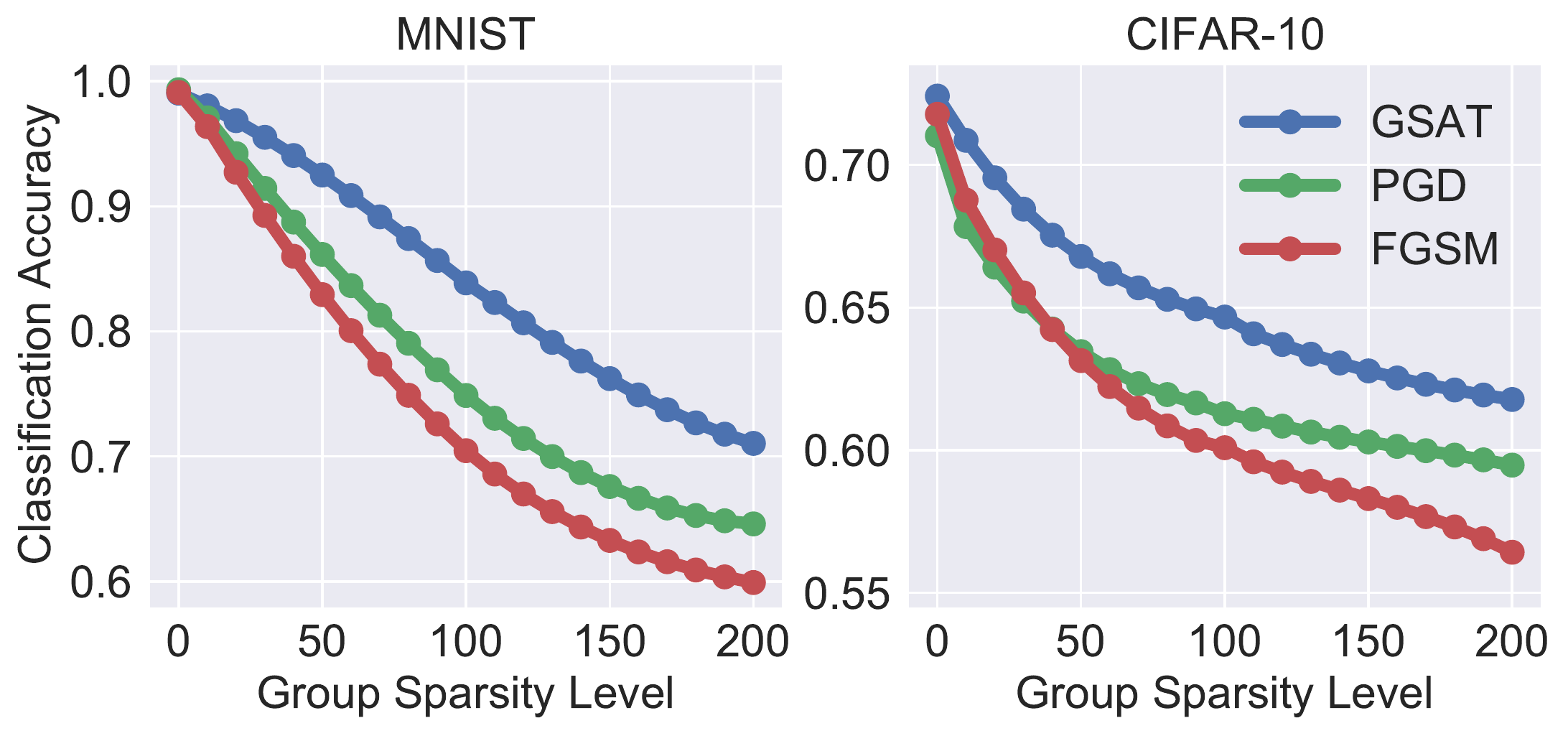}
\caption{GSAT compared to PGD and FGSM defense algorithms on MNIST and CIFAR-10 under group-sparse perturbations.} 
\label{fig:mnist_cifar_sparse}
\end{figure}

\section{Numerical Results}
In this section, we provide the numerical results of several applications of the proposed GSAT framework. In our experiments, we used the following datasets: MNIST \citep{lecun1998mnist} and CIFAR-10 \citep{krizhevsky2009learning} for image recognition, and HapMap GWAS dataset \citep{international2003international} and TCGA cancer atlas dataset \citep{tomczak2015cancer} for computational biology problems. We implemented the GSAT Algorithm~\ref{alg:GRM} in TensorFlow \citep{abadi2016tensorflow} and ran every experiment for $T_0=10^4$ iterations. In the experiments, batch-size and group size were chosen to be $m=200$. For the MNIST and CIFAR-10 experiments, 
we applied the AlexNet architecture \citep{krizhevsky2012imagenet}. For the experiments on GWAS and TCGA data, we applied a 1-hidden layer neural network with 100 smooth ELU \citep{clevert2015fast} neurons.

During test time, we designed the group-structured perturbations using the standard projected gradient descent (PGD) algorithm by properly projecting the perturbation vectors to control the universality, group-sparsity, and rank of the perturbation matrix after every gradient update. We applied 100 iterations of PGD updates with stepsize $0.001\mathbb{E}[\Vert \mathbf{X} \Vert_2]$ and considered a maximum $L_2$-norm of $0.05\, \mathbb{E}[\Vert \mathbf{X} \Vert_2]$ for the perturbation vectors. 

We chose stepsize parameters $\eta_0=10^{-4}$ and $\eta_1=10^{-1}$ in the experiments. In our implementation of Algorithm~\ref{alg:ADMM}, we used the following hyper-parameters: $\alpha = 0.5$, $\rho = 1$, and $T_1=20$. As a rule of thumb, we used $\lambda = 0.25\, \mathbb{E}[\Vert \mathbf{X}\Vert_2]$ to determine $\lambda$. We note that choosing large $\alpha$ and $\lambda$ values led to insufficient robustness to group-structured perturbations in our experiments, while choosing extremely small values for these hyper-parameters resulted in a significant drop in the standard test accuracy under no adversarial perturbations. Based on our experimental results, the proper value for $\lambda$ had an almost linear dependence on the average norm of input samples $\mathbb{E}[\Vert \mathbf{X}\Vert_2]$.

\subsection{GSAT Applied to Image Recognition Datasets}
For both MNIST and CIFAR-10 datasets, we performed four sets of experiments under universal, group-sparse, low-rank, and standard PGD adversarial perturbations. 
We applied GSAT, using indicator, group-norm, and nuclear-norm costs to defend against universal, group-sparse, low-rank, and standard unstructured attacks, respectively. We considered standard PGD defense \citep{madry2017towards} and Fast Gradient Sign Method (FGSM) \citep{goodfellow2014explaining} as the baselines, for which we considered a maximum perturbation norm equal to the average norm of the group-structured perturbations simulated by the GSAT algorithm. For the PGD baseline, we applied $20$ projected gradient steps with the stepsize $0.05\mathbb{E}[\Vert\mathbf{X}\Vert_2]$.

\begin{figure}[t]
\centering
\includegraphics[width=0.8\textwidth]{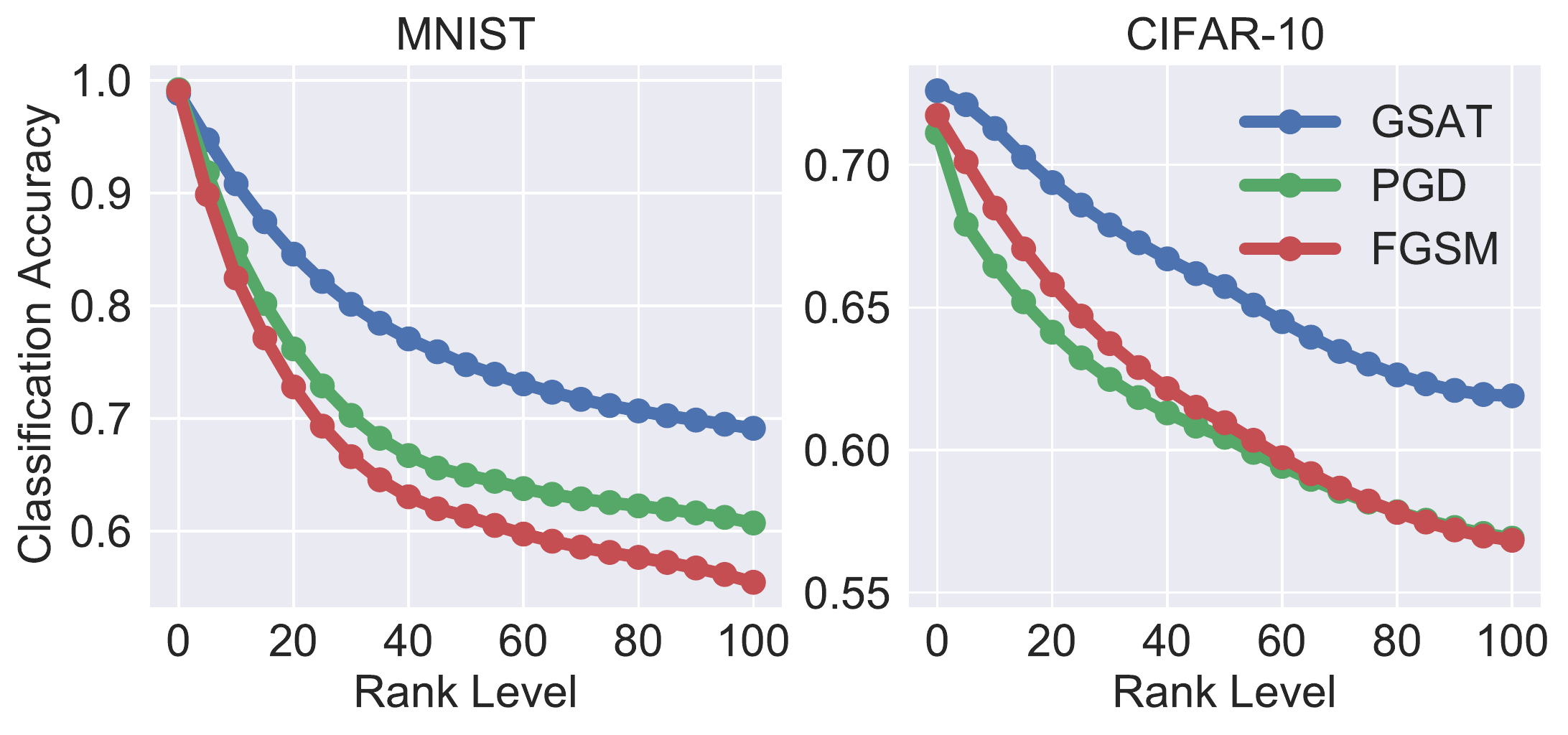}
\caption{GSAT compared to PGD and FGSM on MNIST and CIFAR-10 under low-rank perturbations.} 
\label{fig:mnist_cifar_rank}
\end{figure}

\begin{figure}[t]
\vspace{- 0.0 cm}
\centering
\includegraphics[width=0.8\textwidth]{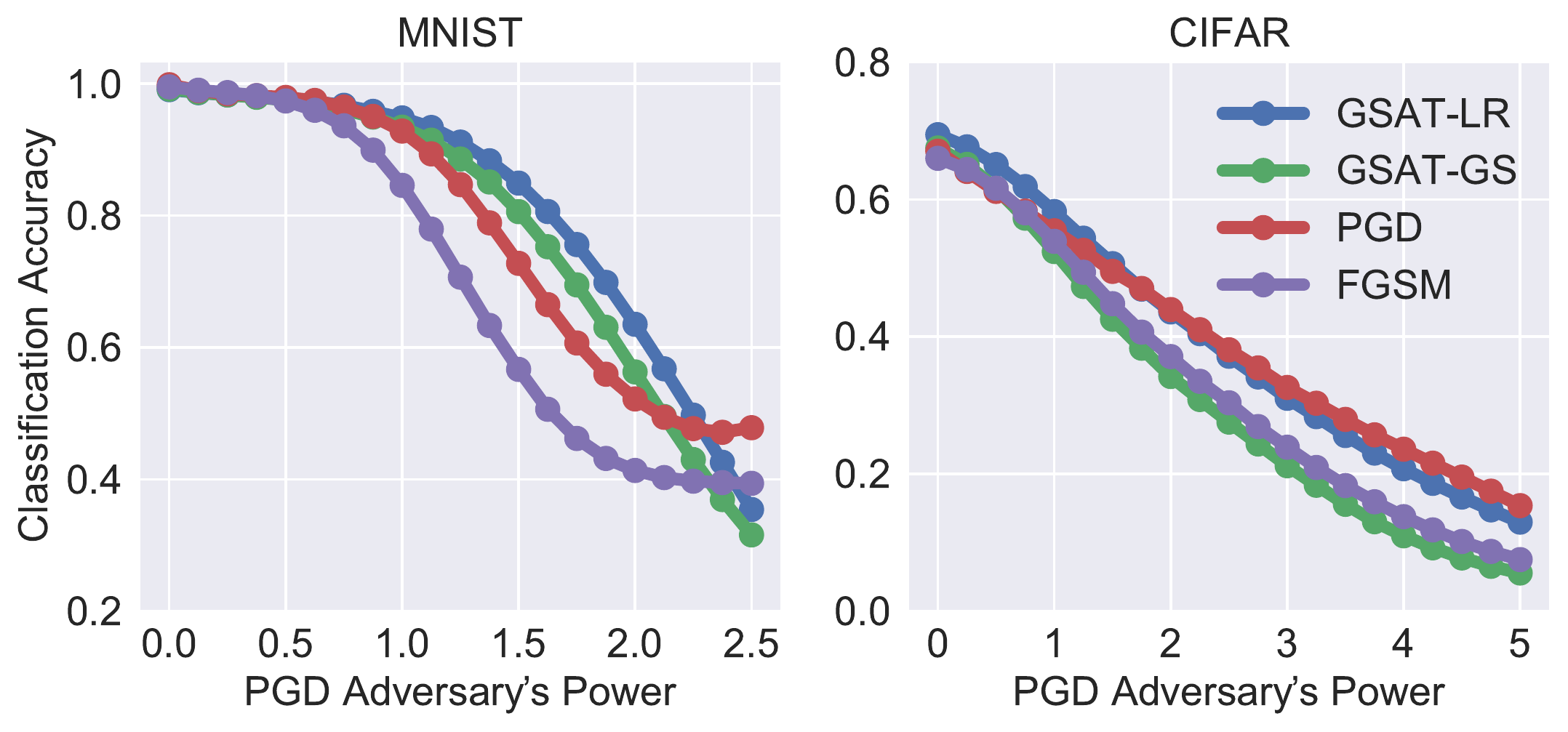}
\caption{Low-Rank GSAT (GSAT-LR) and Group-Sparse GSAT (GSAT-GS) compared to PGD and FGSM defense algorithms on MNIST and CIFAR-10 under standard PGD perturbations with different maximum perturbation norms.} 
\label{fig:mnist_cifar_pgd}
\end{figure}

For universal perturbations, the GSAT learner achieved an adversarial test accuracy of 97.6\% on MNIST and 75.2\% on CIFAR-10 under a universal perturbation, outperforming PGD and FGSM's learned models scoring 96.8\%, 96.4\% on MNIST and 74.5\%, 73.2\%  on CIFAR-10, respectively. For group-sparse perturbations, Fig.~\ref{fig:mnist_cifar_sparse} shows the adversarial test accuracy under group-sparse perturbations with sparsity parameter ranging from 0 to 200. In both MNIST and CIFAR-10 experiments, the proposed GSAT algorithm consistently outperformed standard PGD and FGSM training algorithms against group-sparse shifts. 
Fig.~\ref{fig:mnist_cifar_rank} replicates the experiments of Fig.~\ref{fig:mnist_cifar_sparse} for low-rank attacks with rank parameter changing between 0 and 100. As seen in this figure, the GSAT method outperformed the PGD and FGSM baselines on both MNIST and CIFAR-10 data. 

Finally, we observed that not only the GSAT framework improved robustness against group-sparse and low-rank perturbations, but also it resulted in a performance comparable to that of standard PGD and FGSM defense methods under standard PGD adversarial perturbations. Fig.~\ref{fig:mnist_cifar_pgd} shows the classification accuracy of the AlexNet models trained by rank-constrained GSAT (GSAT-LR) and group-sparse GSAT (GSAT-GS) against unstructured adversarial perturbations designed by the PGD algorithm. As seen in the figure, the achieved performance on both MNIST and CIFAR-10 test samples is similar to the performance of PGD and FGSM defense methods. We note that while for smaller PGD adversary's power values GSAT could even outperform standard PGD defense under standard PGD perturbations, for larger power values the PGD defense method performed marginally better.

In terms of computational speed, GSAT was only slightly slower than standard PGD training. In the MNIST experiments on one Tesla V100 GPU, GSAT with group-sparse and rank-constrained forms took on average $0.151$ and $0.139$ seconds per iteration, respectively, while every iteration of PGD training with the same number of inner maximization steps  took  $0.112$ seconds on average. Similarly, in the CIFAR-10 experiments on one Tesla V100 GPU, GSAT with group-sparsity and rank constraints needed $0.248$ and $0.344$ seconds per iteration, respectively, whereas PGD training spent $0.208$ seconds per iteration. Therefore, in the worst-case scenario GSAT was only $1.65$ times slower than standard PGD adversarial training, which suggests that GSAT's offered robustness against group-structured perturbations does not require significantly larger computational power than standard adversarial training.  

\subsection{GSAT Applied to Computational Biology Datasets}
\subsubsection{Robust Feature Selection in GWAS}
Genome-Wide Association Studies (GWAS) concern with identifying single nucleotide polymorphisms (SNPs) on the human DNA sequence that are associated with certain traits of an individual. Here, we apply the developed GSAT learning algorithm to perform classification and feature selection on a GWAS dataset containing 1,600 samples from the data collected in the international HapMap project \citep{international2003international}.
The features used include the first 3,475 SNPs on Chromosome 1. Here each SNP takes a value from $\{0,1,2\}$. The classification task is to identify the native continent of an individual from the following three categories: Africa (505 samples), Asia (675 samples), and Europe (420 samples). We randomly divided the data points into two sets of 1,200 training and 400 test samples.  We applied one-hot encoding to encode the SNP features, resulting in a total of 3$\times$3475$=$10425 zero-one features.

 We applied the proposed GSAT algorithm to learn a prediction rule robust against group-sparse perturbations. As shown in  Fig.~\ref{fig:gwas_tcga}, using the GSAT algorithm we achieved a considerably better performance compared to standard PGD and FGSM defense algorithms. Similar to the MNIST and CIFAR-10 experiments, we determined the PGD and FGSM's norm bound on perturbations to be the average power of the group-sparse perturbations in GSAT. 
 
 We then used the GSAT's trained classifier for feature selection and identified the top features targeted by the group-sparse adversary. This feature selection strategy based on the proposed GSAT framework applies to both linear and non-linear models. On the other hand, variable selection methods based on $L_1$-norm regularization such as the LASSO \citep{tibshirani1996regression} are limited to linear models and do not directly apply to non-linear DNNs. We defer further discussions on this group-sparse robust feature selection strategy to the Appendix. 
 
For the GSAT's trained ELU network, we selected the top 100 features targeted by the group-sparse structured perturbations. Using the selected 100 features, we trained another 1-hidden layer neural net with 100 ELU activation units and obtained $94.5\%$ test accuracy. This test accuracy was better than the $92.0\%$  accuracy obtained by the top $107$ features chosen by the LASSO \citep{tibshirani1996regression} and the $92.5\%$ accuracy for the 100 features chosen by a greedy gradient-based feature selection algorithm \citep{li2016deep}, selecting features by sorting the Euclidean norms of every gradient entry across training samples. 

\subsubsection{Learning Robust to Batch Effects in TCGA }
\vspace{-1mm}
The Cancer Genome Atlas (TCGA), a landmark cancer genomics program, has generated petabytes of genomics data over the past years which has led to improvements in diagnosis and treatment of cancer \citep{tomczak2015cancer}. In our experiments, we used a TCGA dataset publicly available on the UCI repository with 800 patients suffering from 5 different types of cancer as the label and 20,351 gene expression levels as the features. We divided the 800 samples to two sets of 500 training and 300 test samples.

\begin{figure}[t]
\vspace{- 0.0 cm}
\centering
\includegraphics[width=0.8\textwidth]{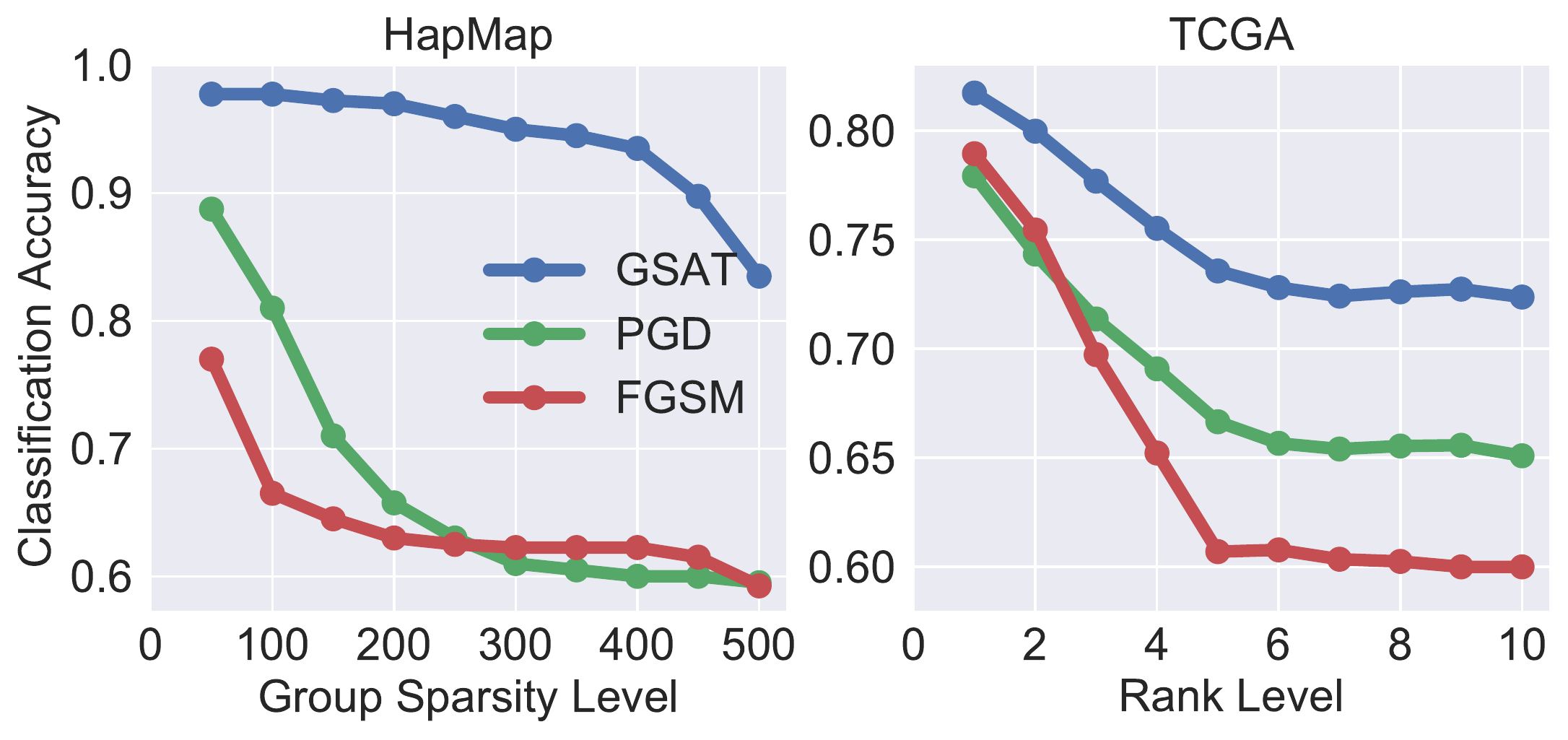}
\caption{ (Left) GSAT against PGD and FGSM on GWAS HapMap data under group-sparse attacks with different sparsity values. (Right) GSAT against PGD and FGSM on TCGA data under low-rank attacks with different rank values.} 
\label{fig:gwas_tcga}
\end{figure}

Since TCGA samples have been collected in multiple phases, TCGA datasets typically suffer from batch effects \citep{lauss2013monitoring}. In order to have a robust prediction under batch effects, we considered models learned by a GSAT learner robust to low-rank perturbations. Therefore, we applied the GSAT algorithm corresponding to a nuclear-norm group cost. While on the unperturbed test samples the GSAT's learned classifier performed as well as the ERM learner with $84.33\%$ accuracy, the GSAT's trained network improved robustness against low-rank perturbations. As shown in Fig.~\ref{fig:gwas_tcga}, GSAT achieved a better robustness performance against low-rank perturbations in comparison to standard PGD and FGSM baselines. As a result, the GSAT's trained classifier is expected to generalize better to other TCGA datasets under batch effects.

\section{Conclusion}

We developed the GSAT learning framework for training machine learning models robust against a general class of group-structured perturbations. Group-structured perturbations generalize the concept of universal adversarial attacks to perturbations sharing a common structure across samples. Under group-structured perturbations such as batch and cell-type effects, we showed that the models trained by the GSAT learning algorithm perform significantly better than standard adversarially-trained models. 
Our work opens up new avenues in studying and characterizing the uncertainties existing in real-world datasets through the foundations of optimal transport theory.


\bibliography{example_paper}
\appendix
\section{GSAT for Robust Feature and Basis Selection}\label{Section: Feature Selection}
Feature selection is a basic task to extract knowledge from high-dimensional datasets by selecting a subset of features which properly model the output label. In the learning literature, multiple feature selection algorithms have been proposed for linear models. A popular feature selection algorithm is the LASSO \citep{tibshirani1996regression}, which regularizes the $L_1$-norm of a linear model. However, such $L_1$-norm regularization methods do not direct apply to \emph{non-linear} models including deep neural nets. A heuristic used for feature selection on DNNs is based on perturbing and assigning a relevance score to features followed by selecting the features with the highest relevance scores \citep{li2016deep}. Such greedy  selection approaches are typically computationally expensive and become sub-optimal if the input features are highly correlated.

Here, we propose using the GSAT framework with the group-norm group cost for selecting the top relevant features influencing a DNN model's output the most. Here we identify the top $k$ features whose perturbation increases the loss function the most. To formulate an optimization problem for this idea, we solve the following problem for the feature subset $\Gamma$:
\begin{equation}\label{Feature Selection Equation}
    \underset{\Gamma:\, \operatorname{card}(\Gamma)\le k}{\arg\!\max}\: \min_{\mathbf{w}\in \mathcal{W}}\: \mathbb{E}\bigl[ \ell\bigl(\, f_\mathbf{w}\bigl(\mathbf{X} + \delta^{\text{\rm adv}(\Gamma)}(\mathbf{X})\bigr)\, , \,  Y\, \bigr) \bigr],
\end{equation}
where $\delta^{\text{\rm adv}(\Gamma)}(\mathbf{X})$ denotes the adversarial perturbation for input $\mathbf{X}$ applied to the features in $\Gamma$. The group-structured risk minimization problem generates group-sparse perturbations via the group-norm cost, and hence addresses this feature selection problem. As a result, the proposed group-structured robustness framework can be applied to give a robust feature selection algorithm.

We can extend this feature selection strategy to dimensionality reduction problems and similarly find the most relevant low-rank subspace in the space of features. To find the top $k$ relevant linear feature combinations, we update \eqref{Feature Selection Equation} to the following problem:
\begin{equation}\label{Basis Selection Equation}
    \underset{\Lambda:\, \operatorname{rank}(\Lambda)\le k}{\arg\!\max}\: \min_{\mathbf{w}\in \mathcal{W}}\: \mathbb{E}\bigl[ \ell\bigl(\, f_\mathbf{w}\bigl(\mathbf{X} + \delta^{\text{\rm adv}(\Lambda)}(\mathbf{X})\bigr)\, , \,  Y\, \bigr) \bigr].
\end{equation}
Here $\delta^{\text{\rm adv}(\Lambda)}(\mathbf{X})$ is a perturbation vector designed over the rank-bounded linear subspace $\Lambda$. Then, the group-structured risk minimization with the nuclear-norm cost  addresses this basis selection task under the condition that the simulated low-rank perturbations always target an identical low-rank subspace for every selection of $m$ samples.

\section{Proofs}
\subsection{Proof of Theorem 1}

According to our definitions,
\begin{align*}
    W_{c_m} ( P, Q)
    \stackrel{(a)}{=} &\inf_{\tiny \mathbf{M} \in \Pi( \underbrace{P\times\ldots \times P}_{\tiny m\, \text{\rm times}},\, \underbrace{ Q,\ldots,Q}_{\tiny m\, \text{\rm times}}) } \: \mathbb{E}_\mathbf{M} \bigl[ \frac{1}{m}c_m(\underline{\mathbf{X}}^m,{\underline{\mathbf{X}}'}^m)  \bigr] \\
    \stackrel{(b)}{=} & \inf_{\tiny \begin{aligned} & \mathbf{M} \in \Pi( P^m,\, \mathbf{R}_m) \\
    & \forall i , D_i: \mathbb{E}_{R_i}[D_i(\mathbf{X})]= \mathbb{E}_Q[D_i(\mathbf{X})] 
    \end{aligned} } \: \mathbb{E}_{\mathbf{M}} \bigl[ \frac{1}{m}c_m(\underline{\mathbf{X}}^m,{\underline{\mathbf{X}}'}^m)  \bigr] \\
    \stackrel{(c)}{=} & \max_{D_1 , \ldots , D_m} \: \inf_{\tiny \mathbf{M} \in \Pi( P^m, \mathbf{R}_m) } \: \mathbb{E}_\mathbf{M} \bigl[ \frac{1}{m}c_m(\underline{\mathbf{X}}^m,{\underline{\mathbf{X}}'}^m)  \bigr] \\
    & \quad + \frac{1}{m}\sum_{i=1}^m \bigl[ \, \mathbb{E}_Q[D_i(\mathbf{X})] - \mathbb{E}_{R_i}[D_i(\mathbf{X})] \, \bigr] \\
    \stackrel{(d)}{=} & \max_{D } \: \inf_{\tiny \mathbf{M} \in \Pi( P^m, \mathbf{R}_m) } \: \biggl\{ \mathbb{E}_\mathbf{M} \bigl[ \frac{1}{m}c_m(\underline{\mathbf{X}}^m,{\underline{\mathbf{X}}'}^m)  \bigr] \\
    & \quad + \frac{1}{m}\sum_{i=1}^m \bigl[ \, \mathbb{E}_Q[D(\mathbf{X})] - \mathbb{E}_{R_i}[D(\mathbf{X})] \, \bigr] \biggr\} \\
    \stackrel{(e)}{=} & \max_{D } \: \mathbb{E}_Q[D(\mathbf{X})] +
      \inf_{\mathbf{M} \in \tiny \Pi( P^m ,  \mathbf{R}_m) }  \biggl\{ \\ 
      & \quad  \mathbb{E}_{\mathbf{M}} \bigl[ \frac{1}{m}c_m(\underline{\mathbf{X}}^m,{\underline{\mathbf{X}}'}^m)  \bigr] - \frac{1}{m}\sum_{i=1}^m \mathbb{E}_{R_i}[D(\mathbf{X})] \biggr\} \\
    \stackrel{(f)}{=} & \max_{D } \: \mathbb{E}_Q[D(\mathbf{X})] + \inf_{\tiny \mathbf{M} \in  \Pi( P^m , \mathbf{R}_m) } \: \biggl\{ \\
    & \quad \mathbb{E} \bigl[ \frac{1}{m}c_m(\underline{\mathbf{X}}^m,{\underline{\mathbf{X}}'}^m)  \bigr]
     - \frac{1}{m}\sum_{i=1}^m \mathbb{E}_{R_i}[D(\mathbf{X})] \biggr\} \\
    \stackrel{(g)}{=} & \max_{D }\: \mathbb{E}_Q[D(\mathbf{X})] \\
    &\quad +  \mathbb{E}_\mathbf{P^m}  \bigl[ \inf_{{\underline{\mathbf{x}}'}^m}\:  \frac{1}{m} [ c_m(\underline{\mathbf{X}}^m,{\underline{\mathbf{x}}'}^m)  - \sum_{i=1}^m D(\mathbf{x}'_i) ] \bigr]\\
    \stackrel{(h)}{=} & \max_{D }\: \mathbb{E}_Q[D(\mathbf{X})] \\
    & \quad - \mathbb{E}_\mathbf{P^m}  \bigl[ \sup_{{\underline{\mathbf{x}}'}^m}\:  \frac{1}{m} \sum_{i=1}^m [ D(\mathbf{x}'_i) - c_m(\underline{\mathbf{X}}^m,{\underline{\mathbf{x}}'}^m) ] \bigr].\\
\end{align*}
Here, (a) follows from our definition of group optimal transport costs. (b) substitutes the marginal constraints with the equivalent $m$ constraints forcing each marginal distribution $R_i$ from the $m$-dimensional joint distribution optimization variable $\mathbf{R}_m$ to share all generalized moments for any function $D_i$ with that generalized moment of distribution $Q$. (c) uses the strong duality hold for the convex optimization problem and moves the constraints on the marginal distribution $R_i$'s to the objective. Note that the equivalent dual problem here is maximizing over possibly different $D_i$'s.

(e) utilizes the permutation invariance property of the group cost $c_m$. Note that the optimization problem over $D_i$'s is maximizing a concave objective which is also symmetric with respect to $D_i$'s, i.e. if we swap $D_i$ and $D_j$ for any two different indices $i\neq j$ the objective value will not change as a result of the symmetricity of the group cost function. Since we have a convex optimization problem with symmetric objective and constraints, there exists an optimal solution for $D^*_i$'s where $D^*_1=\cdots =D^*_m$. As a result, we can equivalently optimize only one function $D$ for a permutation invariant cost function. (f) holds because the term $\mathbb{E}_Q[D(\mathbf{X})]$ is independent from the optimization variables of the inner minimization problem. (g) holds since for a compact support set $\mathcal{X}$ we have the optimal joint-distribution $\mathbf{R}_m$ of random vector $\underline{\mathbf{X}'}^m$ as the distribution of a function of $\underline{\mathbf{X}}^m$ optimizing $\inf_{{\underline{\mathbf{x}}'}^m}\: c_m(\underline{\mathbf{X}}^m,{\underline{\mathbf{x}}'}^m)  - \sum_{i=1}^m D(\mathbf{x}'_i) $. Finally, (h) is a direct simplification which completes the proof.

\subsection{Proof of Theorem 2}

Without loss of generality we suppose $\lambda =1$, since the group cost $c_m$ can absorb any arbitrary $\lambda > 0$. Then,
\begin{align*}
& \sup_Q\: \mathbb{E}_Q\bigl[\, \ell(f(\mathbf{X}),Y)\, \bigr] -  W_{c_m}(Q,P) \\
 \stackrel{(a)}{=}\, & \sup_{Q}\: \sup_{\tiny \mathbf{M} \in \Pi( P^m,\, \underbrace{ Q,\ldots,Q}_{\tiny m\, \text{\rm times}}) } \biggl\{ \\
 &\mathbb{E}_Q \bigl[\, \ell(f(\mathbf{X}),Y)\, \bigr] - \mathbb{E}_\mathbf{M} \bigl[ \frac{1}{m}c_m(\underline{(\mathbf{X}',Y')}^m,{\underline{(\mathbf{X},Y)}}^m)  \bigr] \biggr\} \\
 \stackrel{(b)}{=}\, & \sup_{\scriptsize \begin{aligned}
&  Q, M \in \Pi( P^m,\, \mathbf{R}_m): \\
& \forall i:\: R_i=Q  
 \end{aligned} } \biggl\{ \\
 &\mathbb{E}_Q\bigl[\, \ell(f(\mathbf{X}),Y)\, \bigr] - \mathbb{E}_\mathbf{M} \bigl[ \frac{1}{m}c_m(\underline{(\mathbf{X}',Y')}^m,\underline{(\mathbf{X},Y)}^m)  \bigr] \biggr\} \\
 \stackrel{(c)}{=} \, & \sup_{\scriptsize \begin{aligned}
&  Q, M \in \Pi( P^m,\, \mathbf{R}_m): \\
& \forall i:\: R_i=Q  
 \end{aligned} } \biggl\{\\
 \mathbb{E}&\bigl[\,\frac{1}{m}\sum_{i=1}^m\bigl\{ \ell(f(\mathbf{X}_i),Y_i) -  c_m(\underline{(\mathbf{X}',Y')}^m,{\underline{(\mathbf{X},Y})}^m \bigr\} \bigr] \biggr\} \\
 \stackrel{(d)}{=} \, & \sup_{\scriptsize \begin{aligned}
&  \mathbf{R}_m, M \in \Pi( P^m,\, \mathbf{R}_m)
 \end{aligned} }\\
 \mathbb{E}&\bigl[\,\frac{1}{m}\sum_{i=1}^m\bigl\{ \ell(f(\mathbf{X}_i),Y_i) -  c_m(\underline{(\mathbf{X}',Y')}^m,({\underline{\mathbf{X},Y}})^m ) \bigr\} \bigr] \biggr\} \\
\stackrel{(e)}{=} \, &  \mathbb{E}_{P^m}\biggl[ \sup_{\underline{\mathbf{x}}'^m,\underline{y}'^m} \biggl\{  \\
&\quad \frac{1}{m}\sum_{i=1}^m\ell(f(\mathbf{x}'_i),y'_i) - c_m\bigl( (\underline{\mathbf{X},Y)}^m),(\underline{\mathbf{x}',y'})^m \bigr) \biggr\} \biggr] \\
\stackrel{(f)}{=} \, & \mathbb{E}_{P^m}\bigl[ \ell\circ f^{ c_m} (\underline{(\mathbf{X},Y)}^m ) \bigr]. 
\end{align*}
Here (a) follows from the definition of optimal transport costs. (b) merges the two maximization problems by defining a new optimization variable which is the joint distribution $\mathbf{R}_m$ with all its first-order marginals being $Q$. (c) follows from a simplification of the objective by taking all expectations over the joint distribution $M$.

(d) follows from the permutation invariance property of the group cost function $c_m$. Note that the optimization objective and constraints are all linear and symmetric, i.e. they do not alter if we swap the same two samples in groups ${\underline{(\mathbf{X},Y)}}^m$ and ${\underline{(\mathbf{X}',Y')}}^m$. Hence, there exists a solution with the same first-order marginals for $\mathbf{R}_m$. (e) is a result of the compactness of the support set $(\mathcal{X},\mathcal{Y})^m$ with the consequence that the optimal joint distribution $\underline{(\mathbf{X},Y)}^m$ follows from a function of random $\underline{(\mathbf{X}',Y')}^m$ solving $\sup_{\underline{\mathbf{x}}'^m,\underline{y}'^m}\: \frac{1}{m}\sum_{i=1}^m\ell(f(\mathbf{x}'_i),y'_i) - c_m\bigl( (\underline{\mathbf{X},Y)}^m),(\underline{\mathbf{x}',y'})^m \bigr) $. (f) immediately comes from the definition in the theorem and hence finishes the proof.

\subsection{Proof of Theorem 3}

We start by proving the following lemmas. Note that Lemma \ref{Lemma, Danskin theorem} is a direct consequence of the Danskin's theorem in \citep{bernhard1995theorem}. 
\begin{lemma}[Danskin's Theorem \citep{bernhard1995theorem}]\label{Lemma, Danskin theorem}
Suppose that $f(\mathbf{w},\boldsymbol{\delta})$ is a continuous function of $\mathbf{w},\boldsymbol{\delta}$ and is continuously differentiable with respect to $\mathbf{w}$. Define
\begin{equation}
    F(\mathbf{w}):= \max_{\boldsymbol{\delta}\in \Delta}\: f(\mathbf{w},\boldsymbol{\delta}).
\end{equation}
Then, if for every $\mathbf{w}$, $f(\mathbf{w},\boldsymbol{\delta})$ has a unique maximizer $\boldsymbol{\delta}^*(\mathbf{w})$ over the compact feasible set $\Delta$, $F(\mathbf{w})$ will be differentiable and satisfies
\begin{equation}
    \nabla F(\mathbf{w}) = \frac{\partial f}{\partial \mathbf{w}} (\mathbf{w},\boldsymbol{\delta}^*(\mathbf{w})).
\end{equation}
\end{lemma}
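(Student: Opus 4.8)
The plan is to prove differentiability of $F$ by computing its one-sided directional derivatives and showing they coincide with the claimed gradient. Fix $\mathbf{w}$ and an arbitrary direction $\mathbf{d}$, and study the difference quotient $\bigl(F(\mathbf{w}+t\mathbf{d})-F(\mathbf{w})\bigr)/t$ as $t\downarrow 0$. I would sandwich it between two expressions converging to the same limit. For the \emph{lower} bound, since $\boldsymbol{\delta}^*(\mathbf{w})$ is feasible at $\mathbf{w}+t\mathbf{d}$ we have $F(\mathbf{w}+t\mathbf{d})\ge f(\mathbf{w}+t\mathbf{d},\boldsymbol{\delta}^*(\mathbf{w}))$, while $F(\mathbf{w})=f(\mathbf{w},\boldsymbol{\delta}^*(\mathbf{w}))$; dividing by $t$ and using differentiability of $f$ in $\mathbf{w}$ yields
\begin{equation}
\liminf_{t\downarrow 0}\frac{F(\mathbf{w}+t\mathbf{d})-F(\mathbf{w})}{t}\;\ge\;\nabla_{\mathbf{w}}f(\mathbf{w},\boldsymbol{\delta}^*(\mathbf{w}))^\top\mathbf{d}.
\end{equation}

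The crux is the matching \emph{upper} bound, which requires controlling how the maximizer moves as $\mathbf{w}$ is perturbed. First I would establish that the argmax map $\mathbf{w}\mapsto\boldsymbol{\delta}^*(\mathbf{w})$ is continuous. This is where the \textbf{uniqueness} hypothesis is essential: given $\mathbf{w}_n\to\mathbf{w}$, compactness of $\Delta$ lets me extract from any subsequence a convergent sub-subsequence $\boldsymbol{\delta}^*(\mathbf{w}_{n_k})\to\bar{\boldsymbol{\delta}}$; passing to the limit in $f(\mathbf{w}_{n_k},\boldsymbol{\delta})\le f(\mathbf{w}_{n_k},\boldsymbol{\delta}^*(\mathbf{w}_{n_k}))$ for arbitrary $\boldsymbol{\delta}\in\Delta$, using joint continuity of $f$, shows $\bar{\boldsymbol{\delta}}$ maximizes $f(\mathbf{w},\cdot)$, hence $\bar{\boldsymbol{\delta}}=\boldsymbol{\delta}^*(\mathbf{w})$ by uniqueness. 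Since every subsequential limit equals $\boldsymbol{\delta}^*(\mathbf{w})$, the whole sequence converges and $\boldsymbol{\delta}^*$ is continuous.

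With continuity of $\boldsymbol{\delta}^*$ in hand, I would bound the quotient from above using the suboptimal point $\boldsymbol{\delta}^*(\mathbf{w}+t\mathbf{d})$ at $\mathbf{w}$: since $F(\mathbf{w})\ge f(\mathbf{w},\boldsymbol{\delta}^*(\mathbf{w}+t\mathbf{d}))$ and $F(\mathbf{w}+t\mathbf{d})=f(\mathbf{w}+t\mathbf{d},\boldsymbol{\delta}^*(\mathbf{w}+t\mathbf{d}))$, the mean value theorem applied to $f(\cdot,\boldsymbol{\delta}^*(\mathbf{w}+t\mathbf{d}))$ gives, for some $\theta_t\in(0,1)$,
\begin{equation}
\frac{F(\mathbf{w}+t\mathbf{d})-F(\mathbf{w})}{t}\;\le\;\nabla_{\mathbf{w}}f\bigl(\mathbf{w}+\theta_t t\mathbf{d},\,\boldsymbol{\delta}^*(\mathbf{w}+t\mathbf{d})\bigr)^\top\mathbf{d}.
\end{equation}
Letting $t\downarrow 0$, continuity of $\boldsymbol{\delta}^*$ gives $\boldsymbol{\delta}^*(\mathbf{w}+t\mathbf{d})\to\boldsymbol{\delta}^*(\mathbf{w})$ and $\mathbf{w}+\theta_t t\mathbf{d}\to\mathbf{w}$, so joint continuity of the partial gradient $\nabla_{\mathbf{w}}f$ forces the right-hand side to $\nabla_{\mathbf{w}}f(\mathbf{w},\boldsymbol{\delta}^*(\mathbf{w}))^\top\mathbf{d}$, yielding the reverse inequality for the $\limsup$.

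Combining the two bounds shows the right directional derivative of $F$ in every direction $\mathbf{d}$ exists and equals $\nabla_{\mathbf{w}}f(\mathbf{w},\boldsymbol{\delta}^*(\mathbf{w}))^\top\mathbf{d}$, which is linear in $\mathbf{d}$; this is exactly Gateaux differentiability with gradient $\frac{\partial f}{\partial\mathbf{w}}(\mathbf{w},\boldsymbol{\delta}^*(\mathbf{w}))$. Finally, since $\mathbf{w}\mapsto\nabla_{\mathbf{w}}f(\mathbf{w},\boldsymbol{\delta}^*(\mathbf{w}))$ is a composition of the continuous map $\nabla_{\mathbf{w}}f$ with the continuous map $\boldsymbol{\delta}^*$, the Gateaux gradient is continuous, which upgrades Gateaux to full (Fréchet) differentiability and completes the proof. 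The main obstacle is precisely the continuity of the argmax map: without uniqueness the subsequential-limit argument breaks, the upper bound need not match the lower one, and $F$ may fail to be differentiable (acquiring only directional derivatives given by a max over the active set of maximizers).
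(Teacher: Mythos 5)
Your proof is correct, but it takes a genuinely different route from the paper for the simple reason that the paper offers no proof of this lemma at all: it states only that the lemma is a direct consequence of the Danskin-type theorem in Bernhard and Rapaport (1995) and leans entirely on that citation. You have supplied the classical self-contained argument the citation stands in for, and its structure — the lower bound from feasibility of $\boldsymbol{\delta}^*(\mathbf{w})$ at the perturbed point, continuity of the argmax map via compactness plus uniqueness (the subsequential-limit argument), the upper bound via the mean value theorem along the segment, and the final upgrade from Gateaux to Fr\'echet differentiability through continuity of the composed gradient $\mathbf{w}\mapsto\nabla_{\mathbf{w}}f(\mathbf{w},\boldsymbol{\delta}^*(\mathbf{w}))$ — is exactly the standard proof in the unique-maximizer case, and each step checks out. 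One point worth making explicit: in the upper bound you invoke \emph{joint} continuity of $\nabla_{\mathbf{w}}f$ in $(\mathbf{w},\boldsymbol{\delta})$ when passing to the limit $\nabla_{\mathbf{w}}f\bigl(\mathbf{w}+\theta_t t\mathbf{d},\boldsymbol{\delta}^*(\mathbf{w}+t\mathbf{d})\bigr)\to\nabla_{\mathbf{w}}f\bigl(\mathbf{w},\boldsymbol{\delta}^*(\mathbf{w})\bigr)$; the lemma's phrasing ``continuously differentiable with respect to $\mathbf{w}$'' is ambiguous on whether the partial gradient is continuous jointly or only in $\mathbf{w}$ for fixed $\boldsymbol{\delta}$, but joint continuity is the standard hypothesis in Danskin-type results (and what the cited reference assumes), so your reading is the intended one — just state it. As for what each approach buys: the paper's citation delivers the result in greater generality (Bernhard and Rapaport establish directional differentiability without uniqueness, with the derivative given by a maximum over the set of maximizers — precisely the failure mode you correctly identify in your closing remark), while your argument is elementary, verifiable line by line, and makes transparent exactly where uniqueness and compactness enter.
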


\begin{lemma}\label{Lemma, theorem 3}
Consider function $f(\mathbf{w},\boldsymbol{\delta})$. Assume that $f$ is $\beta$-smooth in $(\mathbf{w},\boldsymbol{\delta})$ and for every $\mathbf{w}$ $f(\mathbf{w},\cdot)$ is $\mu$-strongly-concave in $\boldsymbol{\delta}$, i.e. it is concave and satisfies
\begin{align*}
    \forall \boldsymbol{\delta},\boldsymbol{\delta}':\: \Vert\nabla_{\delta} f(\mathbf{w},\boldsymbol{\delta}) - \nabla_{\delta} f(\mathbf{w},\boldsymbol{\delta}')\Vert  \ge \mu \Vert\boldsymbol{\delta}-\boldsymbol{\delta}'\Vert.
\end{align*}
Then, for any concave function $g$ the following function will be $\beta/\mu$-Lipschitz in $\mathbf{w}$ $$\delta^*(\mathbf{w}):=\underset{\boldsymbol{\delta}}{\arg\!\max}\:f(\mathbf{w},\boldsymbol{\delta})+g(\boldsymbol{\delta}).$$ 
Also the following function will be $(\beta+\beta^2/\mu)$-smooth in $\mathbf{w}$:
$$F(\mathbf{w}):=\max_{\boldsymbol{\delta}}\:f(\mathbf{w},\boldsymbol{\delta})+g(\boldsymbol{\delta}).$$ 
\end{lemma}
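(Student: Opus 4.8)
The plan is to introduce the combined objective $h(\mathbf{w},\boldsymbol{\delta}) := f(\mathbf{w},\boldsymbol{\delta}) + g(\boldsymbol{\delta})$ and first note that, because $f(\mathbf{w},\cdot)$ is $\mu$-strongly concave and $g$ is concave, $h(\mathbf{w},\cdot)$ is $\mu$-strongly concave in $\boldsymbol{\delta}$ for every fixed $\mathbf{w}$; hence the maximizer $\delta^*(\mathbf{w})$ is unique and well defined. I would then prove the two claims in order: first the $\beta/\mu$-Lipschitzness of $\delta^*$, and then combine it with Danskin's theorem (Lemma~\ref{Lemma, Danskin theorem}) to obtain the $(\beta+\beta^2/\mu)$-smoothness of $F$.

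For the Lipschitz bound, fix $\mathbf{w}_1,\mathbf{w}_2$ and write $\boldsymbol{\delta}_1=\delta^*(\mathbf{w}_1)$, $\boldsymbol{\delta}_2=\delta^*(\mathbf{w}_2)$. The first-order optimality condition for the (possibly non-smooth) concave maximization reads $-\nabla_{\boldsymbol{\delta}} f(\mathbf{w}_i,\boldsymbol{\delta}_i)\in\partial g(\boldsymbol{\delta}_i)$, where $\partial g$ denotes the superdifferential of the concave function $g$. The key mechanism is to combine two facts: (i) monotonicity of $\partial g$, which yields $\langle \nabla_{\boldsymbol{\delta}} f(\mathbf{w}_1,\boldsymbol{\delta}_1)-\nabla_{\boldsymbol{\delta}} f(\mathbf{w}_2,\boldsymbol{\delta}_2),\,\boldsymbol{\delta}_1-\boldsymbol{\delta}_2\rangle\ge 0$; and (ii) the $\mu$-strong concavity of $f(\mathbf{w}_1,\cdot)$. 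Adding and subtracting $\nabla_{\boldsymbol{\delta}} f(\mathbf{w}_1,\boldsymbol{\delta}_2)$ inside the inner product and applying (ii) to the fixed-$\mathbf{w}_1$ piece forces the remaining cross term to be at least $\mu\Vert\boldsymbol{\delta}_1-\boldsymbol{\delta}_2\Vert^2$, while Cauchy--Schwarz together with $\beta$-smoothness bounds that same cross term by $\beta\Vert\mathbf{w}_1-\mathbf{w}_2\Vert\,\Vert\boldsymbol{\delta}_1-\boldsymbol{\delta}_2\Vert$. Rearranging these two estimates gives $\Vert\boldsymbol{\delta}_1-\boldsymbol{\delta}_2\Vert\le(\beta/\mu)\Vert\mathbf{w}_1-\mathbf{w}_2\Vert$.

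For the smoothness of $F$, I would apply Danskin's theorem to $h$: since $h$ is continuously differentiable in $\mathbf{w}$ (the non-smooth term $g$ does not depend on $\mathbf{w}$) and has a unique maximizer in $\boldsymbol{\delta}$, Lemma~\ref{Lemma, Danskin theorem} gives $\nabla F(\mathbf{w})=\nabla_{\mathbf{w}} f(\mathbf{w},\delta^*(\mathbf{w}))$. Then for any $\mathbf{w}_1,\mathbf{w}_2$ I would bound $\Vert\nabla F(\mathbf{w}_1)-\nabla F(\mathbf{w}_2)\Vert$ by a triangle-inequality split through the intermediate point $(\mathbf{w}_2,\boldsymbol{\delta}_1)$, using $\beta$-smoothness on each of the two pieces to get $\beta\Vert\mathbf{w}_1-\mathbf{w}_2\Vert+\beta\Vert\boldsymbol{\delta}_1-\boldsymbol{\delta}_2\Vert$, and finishing with the Lipschitz bound from the previous step to reach $(\beta+\beta^2/\mu)\Vert\mathbf{w}_1-\mathbf{w}_2\Vert$.

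The main obstacle I anticipate is the non-smoothness of $g$, which blocks a naive implicit-function-theorem argument for $\delta^*(\mathbf{w})$ and forces the variational/monotonicity route above. I would need to justify that the superdifferential optimality condition and its monotonicity are valid for the concave $g$ arising in the three cost examples (indicator, group norm, and nuclear norm), and that Danskin's theorem still applies even though $h$ is differentiable only in $\mathbf{w}$. Once these points are secured, the remaining inequalities are routine applications of Cauchy--Schwarz, $\beta$-smoothness, and the first-part Lipschitz estimate.
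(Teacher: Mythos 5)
Your proposal is correct and follows essentially the same route as the paper's proof: first-order optimality at the two maximizers combined into a monotonicity inequality, strong concavity to lower-bound the cross term by $\mu\Vert\boldsymbol{\delta}^*(\mathbf{w}_1)-\boldsymbol{\delta}^*(\mathbf{w}_2)\Vert^2$, Cauchy--Schwarz with $\beta$-smoothness to upper-bound it, and then Danskin's theorem (Lemma~\ref{Lemma, Danskin theorem}) plus a triangle-inequality split to get the $(\beta+\beta^2/\mu)$-smoothness of $F$. The only difference is cosmetic and in your favor: you keep the non-smoothness of $g$ explicit through superdifferential optimality conditions and monotonicity of $\partial g$, whereas the paper writes $\nabla h_{\mathbf{w}}$ for $h_{\mathbf{w}}=f(\mathbf{w},\cdot)+g$ (informal when $g$ is non-smooth, e.g.\ the indicator or nuclear-norm costs) and applies strong concavity to $h_{\mathbf{w}_2}$ rather than to $f(\mathbf{w}_1,\cdot)$, yielding the same final inequalities.
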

\begin{proof}
First of all, note that under the above assumptions for every $\mathbf{w}$, $f(\mathbf{w},\boldsymbol{\delta})+g(\boldsymbol{\delta})$ is a $\mu$-strongly concave function of $\boldsymbol{\delta}$ with a unique-maximizer $\boldsymbol{\delta}^*(\mathbf{w})$. Here we use the notation $h_{\mathbf{w}}(\boldsymbol{\delta}):=f(\mathbf{w},\boldsymbol{\delta})+g(\boldsymbol{\delta})$ for simplicity. Since $h_{\mathbf{w}_1}$ and $h_{\mathbf{w}_2}$ are concave functions, given their optimal maximizers we have
\begin{align*}
    (\boldsymbol{\delta}^*(\mathbf{w}_2)-\boldsymbol{\delta}^*(\mathbf{w}_1))^T\nabla h_{\mathbf{w}_1}(\boldsymbol{\delta}^*(\mathbf{w}_1)) &\le 0, \\
    (\boldsymbol{\delta}^*(\mathbf{w}_1)-\boldsymbol{\delta}^*(\mathbf{w}_2))^T\nabla h_{\mathbf{w}_2}(\boldsymbol{\delta}^*(\mathbf{w}_2)) &\le 0.
\end{align*}
Therefore, we have
\begin{equation*}
   (\boldsymbol{\delta}^*(\mathbf{w}_2)-\boldsymbol{\delta}^*(\mathbf{w}_1))^T(\nabla h_{\mathbf{w}_2}(\boldsymbol{\delta}^*(\mathbf{w}_2))-\nabla h_{\mathbf{w}_1}(\boldsymbol{\delta}^*(\mathbf{w}_1))) \ge 0.
\end{equation*}
Due to the strong-concavity of $h_{\mathbf{w}_1}$ we further have
\begin{align*}
    (\boldsymbol{\delta}^*(\mathbf{w}_2)-\boldsymbol{\delta}^*(\mathbf{w}_1))^T(\nabla h_{\mathbf{w}_2}(\boldsymbol{\delta}^*(\mathbf{w}_1))-\nabla h_{\mathbf{w}_2}(\boldsymbol{\delta}^*(\mathbf{w}_2))) \\
    -\mu\Vert \boldsymbol{\delta}^*(\mathbf{w}_1) - \boldsymbol{\delta}^*(\mathbf{w}_2)\Vert^2 \ge 0.
\end{align*}
Therefore, we will have
\begin{align*}
    &\mu\Vert \boldsymbol{\delta}^*(\mathbf{w}_1) - \boldsymbol{\delta}^*(\mathbf{w}_2)\Vert^2 \\
    \le \, & (\boldsymbol{\delta}^*(\mathbf{w}_2)-\boldsymbol{\delta}^*(\mathbf{w}_1))^T(\nabla h_{\mathbf{w}_2}(\boldsymbol{\delta}^*(\mathbf{w}_1))-\nabla h_{\mathbf{w}_2}(\boldsymbol{\delta}^*(\mathbf{w}_2))) \\
    \le \, & (\boldsymbol{\delta}^*(\mathbf{w}_2)-\boldsymbol{\delta}^*(\mathbf{w}_1))^T(\nabla h_{\mathbf{w}_2}(\boldsymbol{\delta}^*(\mathbf{w}_1))-\nabla h_{\mathbf{w}_1}(\boldsymbol{\delta}^*(\mathbf{w}_1))) \\
    \le \, & \beta\Vert \mathbf{w}_2- \mathbf{w}_1\Vert\Vert \boldsymbol{\delta}^*(\mathbf{w}_2)-\boldsymbol{\delta}^*(\mathbf{w}_1)\Vert 
\end{align*}
which shows that

\begin{equation*}
    \Vert \boldsymbol{\delta}^*(\mathbf{w}_1) - \boldsymbol{\delta}^*(\mathbf{w}_2)\Vert\le\frac{\beta}{\mu}\Vert \mathbf{w}_2 - \mathbf{w}_1\Vert 
\end{equation*}

and completes the proof of the lemma's first part. For the second part, note that the objective in $F$'s definition has a unique maximizer. Hence, we can apply Lemma \ref{Lemma, Danskin theorem} to obtain
\begin{align*}
    \nabla F({\mathbf{w}}) &= \nabla_{\mathbf{w}}\bigl\{ f({\mathbf{w}},\boldsymbol{\delta}^*(\mathbf{w})) + g(\boldsymbol{\delta}^*(\mathbf{w}))\bigr\} \\
    &= \nabla_{\mathbf{w}} f({\mathbf{w}},\boldsymbol{\delta}^*(\mathbf{w})).
\end{align*}
Note that the above holds because $g(\boldsymbol{\delta})$ depends only on $\boldsymbol{\delta}$. Therefore, for every ${\mathbf{w}}_1,{\mathbf{w}}_2$:
\begin{align*}
    &\Vert \nabla F({\mathbf{w}}_1) - \nabla F({\mathbf{w}}_2) \Vert \\
    =\, & \Vert \nabla_{\mathbf{w}}  f({\mathbf{w}_1},\boldsymbol{\delta}^*(\mathbf{w}_1)) - \nabla_{\mathbf{w}} f({\mathbf{w}_2},\boldsymbol{\delta}^*(\mathbf{w}_2)) \Vert \\
     \le\, & \beta\bigl(\Vert\mathbf{w}_1-\mathbf{w}_2\Vert + \Vert\boldsymbol{\delta}^*(\mathbf{w}_2)-\boldsymbol{\delta}^*(\mathbf{w}_1)\Vert\bigr) \\
     \le\, & \beta(1+\beta/\mu)\Vert\mathbf{w}_1-\mathbf{w}_2\Vert,
\end{align*}
which completes the proof.
\end{proof}
Note that based on the theorem's assumptions, the minimax objective will be strongly-convex in $\boldsymbol{\delta}$ with degree $2\lambda(1-\alpha)-\beta\ge \lambda(1-\alpha)$. Considering the ADMM's variables, we have the following minimax optimization problem that is solved by Algorithm 1:
\begin{align*}
&\min_{\mathbf{w}}\; \max_{\tiny \underline{\boldsymbol{\delta}}^m} \frac{1}{m}\sum_{i=1}^m\bigl[\, \ell(f_{\mathbf{w}}(\mathbf{x}_i+{\boldsymbol{\delta}}_i),y_i)- \lambda(1-\alpha) \Vert \boldsymbol{\delta}_i \Vert_2^2 \, \bigr] \\     
&\quad  - \frac{\lambda\alpha}{m} g( \underline{\boldsymbol{\delta}}^m )\, = \\ 
   &\min_{\mathbf{w}}\; \max_{\tiny \begin{aligned}
    &\underline{\boldsymbol{\delta}}'^m , \underline{\boldsymbol{\delta}}^m: \\
    &\underline{\boldsymbol{\delta}}'^m =  \underline{\boldsymbol{\delta}}^m
    \end{aligned}} \frac{1}{m}\sum_{i=1}^m\bigl[\, \ell(f_{\mathbf{w}}(\mathbf{x}_i+{\boldsymbol{\delta}}_i),y_i)- \lambda(1-\alpha) \Vert \boldsymbol{\delta}_i \Vert_2^2 \, \bigr]  \\
    &\quad  - \frac{\lambda\alpha}{m} g( \underline{\boldsymbol{\delta}}'^m )\, = \\ 
    &\min_{\mathbf{w},\Gamma}\; \max_{\tiny \underline{\boldsymbol{\delta}}'^m , \underline{\boldsymbol{\delta}}^m} \frac{1}{m}\sum_{i=1}^m\bigl[\, \ell(f_{\mathbf{w}}(\mathbf{x}_i+{\boldsymbol{\delta}}_i),y_i) - \lambda(1-\alpha) \Vert \boldsymbol{\delta}_i \Vert_2^2 \, \bigr] \\
    &\quad  - \frac{\lambda\alpha}{m} g( \underline{\boldsymbol{\delta}}'^m ) -\frac{\rho}{2m}\Vert \underline{\boldsymbol{\delta}}^m-\underline{\boldsymbol{\delta}}'^m\Vert^2 + \frac{1}{m}\langle \Gamma,\underline{\boldsymbol{\delta}}^m-\underline{\boldsymbol{\delta}}'^m\rangle\, = \\
    &\min_{\mathbf{w},\Gamma} \max_{\tiny  \underline{\boldsymbol{\delta}}^m} \biggl\{ \frac{1}{m}\sum_{i=1}^m\bigl[\, \ell(f_{\mathbf{w}}(\mathbf{x}_i+{\boldsymbol{\delta}}_i),y_i) - \lambda(1-\alpha) \Vert \boldsymbol{\delta}_i \Vert_2^2] \\
    & -\min_{\tiny \underline{\boldsymbol{\delta}}'^m }\bigl\{ \frac{\lambda\alpha}{m} g( \underline{\boldsymbol{\delta}}'^m ) +\frac{\rho}{2m}\Vert \underline{\boldsymbol{\delta}}^m-\underline{\boldsymbol{\delta}}'^m\Vert^2 - \langle \frac{\Gamma}{m},\underline{\boldsymbol{\delta}}^m-\underline{\boldsymbol{\delta}}'^m\rangle\bigr\} \biggr\}.
\end{align*}
Here $\langle \cdot ,\cdot \rangle$ denotes the standard trace inner product. The minimax optimization problem in the last line is in the class of non-convex strongly-concave minimax problems with strong-concavity degree $2\lambda(1-\alpha)-\beta\ge \lambda(1-\alpha)$. Furthermore, according to Lemma \ref{Lemma, theorem 3} the last term in the minimax objective based on minimizing over $\underline{\boldsymbol{\delta}}'^m$ will be smooth with degree $(\rho+1)(1+\frac{\rho+1}{\rho})\le 8\max\{\rho,1\}$.

Algorithm 1 combined with Algorithm 2 can be seen to apply a gradient descent ascent algorithm for solving the above minimax optimization problem. We denote the maximized objective in the above maximization problems using the following notations:

\begin{align}
    F(\mathbf{w}):&=  \max_{\tiny \begin{aligned}
    &\underline{\boldsymbol{\delta}}'^m , \underline{\boldsymbol{\delta}}^m: \\
    &\underline{\boldsymbol{\delta}}'^m =  \underline{\boldsymbol{\delta}}^m
    \end{aligned}} \frac{1}{m}\sum_{i=1}^m\bigl[\, \ell(f_{\mathbf{w}}(\mathbf{x}_i+{\boldsymbol{\delta}}_i),y_i)- \lambda(1-\alpha) \Vert \boldsymbol{\delta}_i \Vert_2^2 \, \bigr]   - \frac{\lambda\alpha}{m} g( \underline{\boldsymbol{\delta}}'^m ), \\
    \tilde{F}(\mathbf{w},\Gamma):&= \max_{\tiny \underline{\boldsymbol{\delta}}'^m , \underline{\boldsymbol{\delta}}^m} \frac{1}{m}\sum_{i=1}^m\bigl[\, \ell(f_{\mathbf{w}}(\mathbf{x}_i+{\boldsymbol{\delta}}_i),y_i) - \lambda(1-\alpha) \Vert \boldsymbol{\delta}_i \Vert_2^2 \, \bigr] - \nonumber \\
    &\quad \frac{\lambda\alpha}{m} g( \underline{\boldsymbol{\delta}}'^m ) -\frac{\rho}{2m}\Vert \underline{\boldsymbol{\delta}}^m-\underline{\boldsymbol{\delta}}'^m\Vert^2 + \frac{1}{m}\langle \Gamma,\underline{\boldsymbol{\delta}}^m-\underline{\boldsymbol{\delta}}'^m\rangle.
\end{align}
Then, based on Theorem C.1 in \citep{lin2019gradient} Algorithm 1 with stepsize choices in Theorem 3 is guaranteed to obey the following over $T$ iterations
\begin{align}
    \frac{1}{T}\sum_{i=1}^T\big\Vert\nabla \tilde{F}(\mathbf{w}^{(i)},\Gamma^{(i)})\big\Vert^2_2 \nonumber\le\,  &\mathcal{O}\bigl(\frac{\kappa(\beta+\rho+1/\rho)(\kappa+\beta+\rho+1/\rho))}{T}\bigr)  \nonumber \\
    \le\,  &\mathcal{O}\bigl(\frac{(\beta+\rho)^2}{T}\bigr) 
    \label{Eq: Thm3 iteration complexity}
\end{align}
Here $\kappa=O(\frac{\beta+2\lambda(1-\alpha)+1+1/\rho}{\lambda(1-\alpha)})=O(3+\frac{2\max\{1,1/\rho\}}{\lambda(1-\alpha)})=O(1)$ is the condition number of the minimax problem. Also, note that $\Vert \nabla_{\Gamma} \tilde{F}(\mathbf{w}^{(i)},\Gamma^{(i)}) \Vert\le \Vert \nabla \tilde{F}(\mathbf{w}^{(i)},\Gamma^{(i)}) \Vert$. According to Lemma \ref{Lemma, Danskin theorem}, given the optimal $\underline{\boldsymbol{\delta}}^m(\mathbf{w},\Gamma),\underline{\boldsymbol{\delta}}'^m(\mathbf{w},\Gamma)$ for minimization variables $\mathbf{w},\Gamma$ we have
\begin{equation}
   \big\Vert \nabla_{\Gamma} \tilde{F}(\mathbf{w},\Gamma) \big\Vert =  \big\Vert\underline{\boldsymbol{\delta}}^m(\mathbf{w},\Gamma) - \underline{\boldsymbol{\delta}}'^m(\mathbf{w},\Gamma) \big\Vert.
\end{equation}
Therefore, assuming that $\Vert \nabla \tilde{F}(\mathbf{w}^{(i)},\Gamma^{(i)}) \Vert \le \epsilon$ implies that $\Vert\underline{\boldsymbol{\delta}}^m(\mathbf{w}^{(i)},\Gamma^{(i)}) - \underline{\boldsymbol{\delta}}'^m(\mathbf{w}^{(i)},\Gamma^{(i)}) \Vert\le \epsilon$. As a result, choosing $\underline{\boldsymbol{\delta}}^m=\underline{\boldsymbol{\delta}}'^m=\underline{\boldsymbol{\delta}}'^m(\mathbf{w}^{(i)},\Gamma^{(i)})$ will reach a min-max objective value that is at most $\frac{1}{2}(\beta+2\lambda(1-\alpha)+\rho)\epsilon^2$ below the optimal maximum value of the original minimax objective given $\mathbf{w}=\mathbf{w}^{(i)}$. Since the objective is $\lambda(1-\alpha)$-concave in $\underline{\boldsymbol{\delta}}^m$, we have 
$$\big\Vert \underline{\boldsymbol{\delta}}'^m(\mathbf{w}^{(i)},\Gamma^{(i)}) - \underline{\boldsymbol{\delta}}^m(\mathbf{w}^{(i)}) \big\Vert^2_2 \le {\frac{\bigl(3\lambda(1-\alpha)+\rho\bigr)\epsilon^2}{\lambda(1-\alpha)}}$$
As a result, the Danskin's theorem implies that given that $\Vert \nabla\tilde{F}(\mathbf{w}^{(i)},\Gamma^{(i)}) \Vert \le \epsilon$ we have
\begin{equation} \label{Eq: Thm 3 proof Eq 2}
   \Vert \nabla F(\mathbf{w}^{(i)}) \Vert \le \epsilon\left(1+\lambda(1-\alpha)\sqrt{3+\frac{\rho}{\lambda(1-\alpha)}}\right).
\end{equation}
Therefore, according to \eqref{Eq: Thm3 iteration complexity} and \eqref{Eq: Thm 3 proof Eq 2} over the following number of iterations we will find a first-order stationary $\mathbf{w}$ for the worst-case objective $\Vert\nabla F(\mathbf{w}) \Vert\le \epsilon$:
\begin{equation}
    \mathcal{O}\bigl(\frac{(\beta+\rho)^2\bigl(1+\rho\lambda(1-\alpha)+\lambda^2(1-\alpha)^2\bigr)}{\epsilon^2}\bigr).
\end{equation}
The above result completes the proof.

\end{document}